\begin{document}

\title{Covering matroid}

\author{Yanfang Liu, William Zhu~\thanks{Corresponding author.
E-mail: williamfengzhu@gmail.com (William Zhu)}}
\institute{
Lab of Granular Computing\\
Zhangzhou Normal University, Zhangzhou 363000, China}



\date{\today}          
\maketitle

\begin{abstract}
In this paper, we propose a new type of matroids, namely covering matroids, and investigate the connections with the second type of covering-based rough sets and some existing special matroids.
Firstly, as an extension of partitions, coverings are more natural combinatorial objects and can sometimes be more efficient to deal with problems in the real world.
Through extending partitions to coverings, we propose a new type of matroids called covering matroids and prove them to be an extension of partition matroids.
Secondly, since some researchers have successfully applied partition matroids to classical rough sets, we study the relationships between covering matroids and covering-based rough sets which are an extension of classical rough sets.
Thirdly, in matroid theory, there are many special matroids, such as transversal matroids, partition matroids, 2-circuit matroid and partition-circuit matroids.
The relationships among several special matroids and covering matroids are studied.

\textbf{Keywords:}
covering-based rough set, covering matroid, partition matroid, transversal matroid, $k$-rank matroid
\end{abstract}


\section{Introduction}
Matroid theory~\cite{Lai01Matroid,Mao06TheRelation} proposed by Whitney is a generalization of linear algebra and graph theory.
Integrating the characteristics of linear algebra and graph theory, matroids have sound theoretical foundations and wide applications.
In theory, matroids have powerful axiomatic systems which provide a platform for connecting them with other theories, such as rough sets~\cite{LiLiu12Matroidal,LiuZhu12characteristicofpartition-circuitmatroid,LiuZhuZhang12Relationshipbetween,TangSheZhu12matroidal,WangZhuZhuMin12matroidalstructure}, generalized rough sets based on relations~\cite{LiuZhu12Matroidalstructureofroughsets,WangZhuMin11TheVectorially,ZhangWangFengFeng11reductionofrough,ZhuWang11Matroidal,ZhuWang11Rough}, covering-based rough sets~\cite{HuangZhu12geometriclatticestructure,WangZhu11Matroidal,WangZhuMin11Transversal,WangZhuZhuMin12quantitative} and geometric lattices~\cite{AignerDowling71matchingtheory,Matus91abstractfunctional}.
In application, matroids have been successfully applied to diverse fields, such as combinatorial optimization~\cite{Lawler01Combinatorialoptimization}, algorithm design~\cite{Edmonds71Matroids},  information coding~\cite{RouayhebSprintsonGeorghiades10ontheindex} and cryptology~\cite{DoughertyFreilingZeger07Networks}.

Rough set theory proposed by Pawlak~\cite{Pawlak82Rough,Pawlak91Rough} is a tool to deal with inexact, uncertain and insufficient knowledge in information systems.
It has been successfully applied to pattern recognition, expert system, medical diagnosis, environmental science, biology, biochemistry,
chemistry psychology, conflict analysis, economics and process control.
However, rough set theory is based on equivalence relations or partitions which are restrictive for many applications.
To solve this problem, some extensions of rough sets are proposed, such as generalized rough sets base on relations~\cite{Kryszkiewicz98Rough,LiuWang06Semantic,QinYangPei08generalizedroughsets,SlowinskiVanderpooten00AGeneralized,Yao98Relational,ZhuWang06Binary}, and covering-based rough sets~\cite{BonikowskiBryniarskiSkardowska98extensionsandintentions,Bryniarski89ACalculus,TsangChengLeeYeung04ontheupper,XuZhang07measuring,Zhu07Topological,ZhuWang07OnThree,ZhuWang12thefourthtype}.

In order to broaden the theoretical and application areas of rough sets and matroids, some researchers have combined them with each other.
For example, Wang et al.~\cite{WangZhuZhuMin12matroidalstructure} have successfully applied 2-circuit matroids to attribution reduction in classical rough sets.
In this paper, we propose a new type of matroids, namely covering matroids, and study the connections with the second type of covering-based rough sets and some existing special matroids.

First, similar to partition matroids, we introduce a new type of matroids called covering matroids.
As we know, the concept of coverings is an extension of the concept of partitions, and coverings are more natural combinatorial objects and can sometimes be more efficient to deal with problems than partitions.
Based on classical rough sets, we obtain covering-based rough sets through extending partitions to coverings.
Similarly, we want to extend partition matroids~\cite{LiuChen94Matroid} to covering matroids through coverings instead of partitions.
We see partition matroid is induced by a partition of a universe and a group of nonnegative integers.
However, a covering can not always induce a matroid using covering blocks instead of partition ones.
In~\cite{LiuZhuZhang12Relationshipbetween}, we represent a partition matroid as a union of $k$-rank matroids induced by all the partition blocks.
In~\cite{Lai01Matroid}, we see the union of any group of matroids is also a matroid.
Therefore, for a covering of a universe, we obtain a matroid called a covering matroid through the union of $k$-matroids induced by the covering blocks.
Moreover, we prove covering matroids are an extension of partition matroids.

Second, since in~\cite{LiuZhuZhang12Relationshipbetween,WangZhuZhuMin12matroidalstructure} partition matroids are successfully used to investigate classical rough sets, we attempt to apply covering matroids to study covering-based rough sets which are an extension of classical rough sets.
In this paper, we focus on the second type of covering-based rough sets and obtain the following results.
For a covering of a universe and a group of positive integers, a covering block can be expressed by the rank function and the closure operator of the $k$-rank matroid induced by the covering block with respect to the corresponding integer.
The neighborhood of an element of the covering is the intersection of the covering blocks containing the element, then the neighborhood can be represented by $k$-rank matroids induced by the covering blocks.
Moreover, the second type of covering lower and upper approximation operators are also expressed by the $k$-rank matroids induced by all the covering blocks.
For a covering of a universe and a group of positive integers, a $k$-rank matroid induced by a covering block can be expressed by the covering matroid induced by the covering.
Therefore, some concepts of covering-based rough sets can be represented by the covering matroid.

Third, since covering matroids are an extension of partition matroids, it is nature to investigate the relationships among covering matroids and some existing special matroids, such as transversal matroids~\cite{Lai01Matroid}, 2-circuit matroids~\cite{WangZhuZhuMin12matroidalstructure} and partition-circuit matroids~\cite{LiuZhu12characteristicofpartition-circuitmatroid}.
We prove that a matroid is a transversal matroid, then it is a covering matroid.
When a matroid is a partition matroid and a transversal matroid, then it is a 2-circuit matroid.
In~\cite{LiuZhu12characteristicofpartition-circuitmatroid}, we prove that the dual of a 2-circuit matroid is a partition-circuit matroid.
Moreover, we define a new type of matroids, namely double-circuit matroids and prove this new matroid is an identically self-dual matroid.
When a matroid is a partition-circuit matroid and a 2-circuit matroid, it is a double-circuit matroid.

The rest of this paper is organized as follows: In Section~\ref{S:backgrounds}, we recall some basic definitions of covering-based rough sets and matroids.
Section~\ref{S:coveringmatroid} proposes a new type of matroids, namely covering matroids, and proves this type of matroids is an extension of partition matroids.
In Section~\ref{S:relationshipsbetweencovering-basedandcoveringmatroids}, we connect covering matroids with the second type of covering-based rough sets.
Section~\ref{S:relationshipamongcoveringmatroidandothermatroids} studies the relationships between covering matroids and some special matroids.
Finally, we conclude this paper in Section~\ref{S:conclusions}.

\section{Backgrounds}
\label{S:backgrounds}
In this section, we present some basic definitions and related results of covering-based rough sets and matroids which will be used in this paper.

\subsection{Covering-based rough set model}
Classical rough set theory proposed by Pawlak~\cite{Pawlak82Rough} in 1982 is based on partitions on a universe.
Covering-based rough set theory is obtained through extending a partition to a covering.

\begin{definition}(Covering~\cite{ZhuWang03Reduction})
\label{D:covering}
Let $U$ be a universe of discourse and $\mathbf{C}$ a family of subsets of $U$.
If none of subsets in $\mathbf{C}$ is empty and $\cup\mathbf{C}=U$, then $\mathbf{C}$ is called a covering of $U$.
\end{definition}

It is clear that a partition of $U$ is certainly a covering of $U$, so the concept of a covering is an extension of the concept of a partition.

Throughout this paper, the universe $U$ is non-empty finite unless otherwise stated.

In the following, we will present some definitions of covering-based rough sets used in this paper.

\begin{definition}(Covering approximation space~\cite{BonikowskiBryniarskiSkardowska98extensionsandintentions})
Let $U$ be a universe and $\mathbf{C}$ a covering of $U$.
We call the ordered pair $(U, \mathbf{C})$ a covering approximation space.
\end{definition}

Neighborhood is an important concept in covering-based rough sets and has been widely applied to knowledge classification and feature selection.

\begin{definition}(Neighborhood~\cite{Zhu09RelationshipBetween})
\label{D:neighborhood}
Let $(U, \mathbf{C})$ be a covering approximation space.
For all $x\in U$, $N_{\mathbf{C}}(x)=\cap\{K\in\mathbf{C}:x\in K\}$ is called the neighborhood of $x$ with respect to $\mathbf{C}$.
When there is no confusion, we omit the subscript $\mathbf{C}$.
\end{definition}

Zakowski first extended classical rough sets to covering-based rough sets by using a covering instead of a partition~\cite{Zakowski83Approximations}.
In order to satisfy different needs of application areas, many types of covering-based rough sets are established.
In this paper, we investigate only the second type of covering-based rough sets proposed by Pomykala~\cite{Pomykala87Approximation}.

\begin{definition}(The second type of covering lower and upper approximations~\cite{Pomykala87Approximation})
\label{D:coveringapproximation}
Let $(U, \mathbf{C})$ be a covering approximation space.
For any $X\subseteq U$,\\
\centerline{$SL_{\mathbf{C}}(X)=\cup\{K\in\mathbf{C}:K\subseteq X\}$,}\\
\centerline{$SH_{\mathbf{C}}(X)=\cup\{K\in\mathbf{C}:K\cap X\neq\emptyset\}$,}
where $SL, SH$ are the second type of covering lower, upper approximation operators, respectively.
When there is no confusion, we omit the subscript $\mathbf{C}$.
\end{definition}

\subsection{Matroid model}
Matroids have many different but equivalent definitions.
In the following definition, we will introduce one from the viewpoint of independent sets.

\begin{definition}(Matroid~\cite{Lai01Matroid})
\label{D:matroid}
A matroid is a pair $M=(U, \mathbf{I})$ consisting of a finite universe $U$ and a collection $\mathbf{I}$ of subsets of $U$ called independent sets satisfying the following three properties:\\
(I1) $\emptyset\in\mathbf{I}$;\\
(I2) If $I\in \mathbf{I}$ and $I'\subseteq I$, then $I'\in \mathbf{I}$;\\
(I3) If $I_{1}, I_{2}\in \mathbf{I}$ and $|I_{1}|<|I_{2}|$, then there exists $u\in I_{2}-I_{1}$ such that $I_{1}\cup \{u\}\in \mathbf{I}$, where $|I|$ denotes the cardinality of $I$.
\end{definition}

Since the above definition of matroids focuses on independent sets, it is also called the independent set axioms of matroids.
For a matroid, if a set is not an independent set, then it is called a dependent set of the matroid.
Based on the dependent sets, we introduce the circuits of a matroid.
For this purpose, some denotations are presented.

\begin{definition}
Let $\mathbf{A}$ be a family of subsets of $U$.
One can denote\\
$Opp(\mathbf{A})=\{X\subseteq U:X\notin\mathbf{A}\}$,\\
$Min(\mathbf{A})=\{X\in\mathbf{A}:\forall Y\in\mathbf{A}, Y\subseteq X\Rightarrow X=Y\}$,\\
$Max(\mathbf{A})=\{X\in\mathbf{A}:\forall Y\in\mathbf{A}, X\subseteq Y\Rightarrow X=Y\}$.
\end{definition}

The dependent sets of a matroid generalize the linear dependence in vector spaces and the cycle in graphs.
The circuit of a matroid is a minimal dependent set.

\begin{definition}(Circuit~\cite{Lai01Matroid})
\label{D:circuit}
Let $M=(U, \mathbf{I})$ be a matroid.
Any minimal dependent set in $M$ is called a circuit of $M$, and we denote the family of all circuits of $M$ by $\mathbf{C}(M)$, i.e., $\mathbf{C}(M)=Min(Opp(\mathbf{I}))$.
\end{definition}

Any maximal independent set of a matroid is a base.
The bases of a matroid generalize the maximal linear independence in vector spaces.

\begin{definition}(Base~\cite{Lai01Matroid})
\label{D:base}
Let $M=(U, \mathbf{I})$ be a matroid.
Any maximal independent set in $M$ is called a base of $M$, and the family of all bases of $M$ is denoted by $\mathbf{B}(M)$, i.e., $\mathbf{B}(M)=Max(\mathbf{I})$.
\end{definition}

As a generalization of linear algebra, a matroid has its rank function which generalizes the maximal independence in vector subspaces.

\begin{definition}(Rank function \cite{Lai01Matroid})
\label{D:rank}
Let $M=(U, \mathbf{I})$ be a matroid and $X\subseteq U$.\\
\centerline{$r_{M}(X)=max\{|I|:I\subseteq X, I\in \mathbf{I}\}$,}
where $r_{M}$ is called the rank function of $M$.
\end{definition}

In order to represent the dependency between an element and a subset of a universe, we introduce the closure operator of a matroid.

\begin{definition}(Closure~\cite{Lai01Matroid})
\label{D:closure}
Let $M=(U, \mathbf{I})$ be a matroid and $X\subseteq U$.
For any $u\in U$, if $r_{M}(X)=r_{M}(X\cup\{u\})$, then $u$ depends on $X$.
The subset of all elements depending on $X$ of $U$ is called the closure of $X$ with respect to $M$ and denoted by $cl_{M}(X)$:\\
\centerline{$cl_{M}(X)=\{u\in U:r_{M}(X)=r_{M}(X\cup\{u\})\}$.}
\end{definition}

Duality is one of important characteristics of matroids, which can generate a new matroid through a given matroid.
And the new matroid is called the dual matroid of the given one which generalizes the orthogonal complement of a vector space.
In the following definition, we introduce the dual matroid of a matroid from the viewpoint of bases.

\begin{definition}(Dual matroid~\cite{Lai01Matroid})
\label{D:dualmatroid}
Let $M=(U, \mathbf{I})$ be a matroid and $\mathbf{B}^{*}(M)=\{B^{c}:B\in\mathbf{B}(M)\}$.
Then $\mathbf{B}^{*}(M)$ is the family of bases of a matroid which is called the dual matroid of $M$ and denoted by $M^{*}$.
\end{definition}

In the following definition, we introduce a special type of matroids called partition matroids.

\begin{definition}(Partition matroid~\cite{LiuChen94Matroid})
\label{D:partitionmatroid}
Let $\mathbf{P}=\{P_{1}, \cdots, P_{m}\}$ be a partition of $U$ and $k_{1}, \cdots, k_{m}$ a group of nonnegative integers.
Then $M(\mathbf{P}; k_{1}, \cdots, k_{m})=(U, \mathbf{I}(\mathbf{P}; k_{1},$ $ \cdots, k_{m}))$ is a matroid, where $\mathbf{I}(\mathbf{P}; k_{1}, \cdots$, $ k_{m})=\{X\subseteq U: |X\cap P_{i}|\leq k_{i}, 1\leq i\leq m\}$, and it is called the partition matroid induced by $\mathbf{P}$ with respect to $k_{1}, \cdots, k_{m}$.
\end{definition}

\section{Covering matroid}
\label{S:coveringmatroid}
According to Definition~\ref{D:covering}, we see the concept of a covering is an extension of the concept of a partition.
Similar to partition matroids, can a covering induce a matroid?

\begin{remark}
Let $\mathbf{C}=\{K_{1}, \cdots, K_{m}\}$ be a covering of $U$ and $k_{1}, \cdots, k_{m}$ a group of nonnegative integers.
Then $M(\mathbf{C}; k_{1}, \cdots, k_{m})=(U, \mathbf{I}(\mathbf{C}; k_{1},$ $ \cdots, k_{m}))$ is not always a matroid, where $\mathbf{I}(\mathbf{C}; k_{1}, \cdots$, $ k_{m})=\{X\subseteq U: |X\cap K_{i}|\leq k_{i}, 1\leq i\leq m\}$.
\end{remark}

An example is given to illustrate the above remark.

\begin{example}
\label{E:example1}
Let $U=\{a, b, c\}$ and $\mathbf{C}=\{K_{1}, K_{2}\}$ where $K_{1}=\{a, b\}, K_{2}=\{b, c\}$.
Suppose $k_{1}=k_{2}=1$, by direct computation, $\mathbf{I}(\mathbf{C}; k_{1}, k_{2})=\{\emptyset, \{a\}, \{b\}, \{c\}, \{a, c\}\}$.
According to (I3) of Definition~\ref{D:matroid}, $\{b\}, \{a, c\}\in\mathbf{I}(\mathbf{C}; k_{1}, k_{2})$ and $|\{b\}|<|\{a, c\}|$, for all $x\in \{a, c\}-\{b\}=\{a, c\}, \{b\}\cup\{x\}\notin\mathbf{I}(\mathbf{C}; k_{1}, k_{2})$, i.e., $\{a, b\}\notin\mathbf{I}(\mathbf{C}; k_{1}, k_{2})$ and $\{b, c\}\notin\mathbf{I}(\mathbf{C}; k_{1}, k_{2})$.
Therefore $M(\mathbf{C}; k_{1}, k_{2})=(U, \mathbf{I}(\mathbf{C};$ $ k_{1}, k_{2}))$ is not a matroid.
\end{example}

Given a covering and a group of nonnegative integers, similar to partition matroids, the covering can not always induce a matroid.
It is nature to ask the following question: how can we combine the covering and the group of nonnegative integers so that the covering can induce a matroid?
In order to solve this issue, we first introduce a type of matroids called $k$-rank matroids.

\begin{definition}($k$-rank matroid~\cite{LiuZhuZhang12Relationshipbetween})
\label{D:krankmatroid}
Let $U$ be a universe, $X\subseteq U$ and $k$ a nonnegative integer.
Then $M_{U}(X, k)=(U, \mathbf{I}(X, k))$ is a matroid where $\mathbf{I}(X, k)=\{Y\subseteq X: |Y|\leq k\}$.
We say $M_{U}(X, k)$ is a $k$-rank matroid on $U$ induced by $X$ with respect to $k$. When there is no confusion, we omit the subscript $U$.
\end{definition}

Union of matroids was introduced by Nash-Williams in 1966.
In the following, we will recall the definition of union of matroids on the same universe.

\begin{definition}(Union of matroids~\cite{Lai01Matroid})
\label{D:unionofmatroids}
Let $M_{1}=(U, \mathbf{I}_{1}), \cdots, M_{m}=(U, \mathbf{I}_{m})$ be a group of matroids.
Then $M=(U, \mathbf{I})$ is a matroid where $\mathbf{I}=\{I_{1}\cup\cdots\cup I_{m}:I_{i}\in \mathbf{I}_{i}, 1\leq i\leq m\}$, and it is called the union of $M_{1}, \cdots, M_{m}$ and denoted by $M=\sum\limits_{i=1}^{m}M_{i}$.
\end{definition}

Given a covering and a group of nonnegative integers, according to Definition~\ref{D:krankmatroid}, any covering block of the covering and an integer can induce a $k$-rank matroid.
According to Definition~\ref{D:unionofmatroids}, we can construct a new matroid through combining the $k$-rank matroids generated by all the covering blocks of the covering.
The new matroid is called a covering matroid.

\begin{definition}(Covering matroid)
\label{D:coveringmatroid}
Let $\mathbf{C}=\{K_{1}, \cdots, K_{m}\}$ be a covering of $U$, $k_{1},$ $ \cdots, k_{m}$ a group of nonnegative integers and $M(K_{i}, k_{i})=(U, \mathbf{I}(K_{i}, k_{i}))$ the $k$-rank matroid induced by $K_{i} (1\leq i\leq m)$.
Then $M(\mathbf{C}; k_{1}, \cdots, k_{m})=(U, \mathbf{I}(\mathbf{C}; k_{1},$ $ \cdots, k_{m}))$ is a matroid, where $\mathbf{I}(\mathbf{C}; k_{1}, \cdots, k_{m})=\{I_{1}\cup\cdots\cup I_{m}:I_{i}\in\mathbf{I}(K_{i}, k_{i}), 1\leq i\leq m\}$.
We say $M(\mathbf{C}; k_{1}, \cdots, k_{m})$ is induced by $\mathbf{C}$ with respect to $k_{1}, \cdots, k_{m}$, denote it as $M(\mathbf{C}; k_{1}, \cdots, k_{m})=\sum\limits_{i=1}^{m}M(K_{i}, k_{i})$ and call it a covering matroid.
\end{definition}

In order to further understand the new type of matroids, an example is given in the following.

\begin{example}(Continued from Example~\ref{E:example1})
\label{E:example2}
The $k$-rank matroids induced by $K_{1}, K_{2}$ are $M(K_{1}, k_{1})=(U, \mathbf{I}(K_{1}, k_{1})), M(K_{2}, k_{2})=(U, \mathbf{I}(K_{2}, k_{2})$.
By direct computation, $\mathbf{I}(K_{1}, k_{1})=\{\emptyset, \{a\}, \{b\}\}, \mathbf{I}(K_{2}, k_{2})=\{\emptyset, \{b\}, \{c\}\}$.
Therefore the covering matroid induced by $\mathbf{C}$ with respect to $k_{1}, k_{2}$ is $M(\mathbf{C}; k_{1}, k_{2})=(U, \mathbf{I}(\mathbf{C}; k_{1}, k_{2}))$, where $\mathbf{I}(\mathbf{C}; k_{1}, k_{2})=\{\emptyset, \{a\}, \{b\}, \{c\}, \{a, b\}, \{a, c\}, \{b, c\}\}$.
\end{example}

Since a covering is an extension of a partition on a universe, we ask a questions: whether a covering matroid is a partition matroid when the covering degenerates into a partition?
In order to solve this question, we introduce the following theorem.

\begin{theorem}(\cite{LiuZhuZhang12Relationshipbetween})
Let $\mathbf{P}=\{P_{1}, \cdots, P_{m}\}$ be a partition on $U$, $k_{1}, \cdots, k_{m}$ a group of nonnegative integers and $M(\mathbf{P}; k_{1}, \cdots, k_{m})$ the partition matroid induced by $\mathbf{P}$ with respect to $k_{1}, \cdots, k_{m}$.
Then $M(\mathbf{P};k_{1}, \cdots, k_{m})=\sum\limits_{i=1}^{m}M(P_{i}, k_{i})$.
\end{theorem}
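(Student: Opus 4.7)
The plan is to prove the two matroids coincide by showing they have the same collection of independent sets, that is,
\[
\mathbf{I}(\mathbf{P}; k_{1}, \ldots, k_{m}) \;=\; \{I_{1} \cup \cdots \cup I_{m} : I_{i} \in \mathbf{I}(P_{i}, k_{i}),\, 1 \leq i \leq m\}.
\]
Since both sides define matroids (the left one by Definition~\ref{D:partitionmatroid}, the right one by Definitions~\ref{D:krankmatroid} and~\ref{D:unionofmatroids}), it suffices to verify set equality of their independent sets; no further axiom checking is needed.

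For the forward inclusion, I would take $X \in \mathbf{I}(\mathbf{P}; k_{1}, \ldots, k_{m})$, so $|X \cap P_{i}| \leq k_{i}$ for every $i$, and then set $I_{i} := X \cap P_{i}$. Because the $P_{i}$ partition $U$, we have $X = \bigcup_{i=1}^{m}(X \cap P_{i}) = \bigcup_{i=1}^{m} I_{i}$. Each $I_{i} \subseteq P_{i}$ and $|I_{i}| \leq k_{i}$, so $I_{i} \in \mathbf{I}(P_{i}, k_{i})$, which places $X$ in the right-hand side.

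For the reverse inclusion, I would take $X = I_{1} \cup \cdots \cup I_{m}$ with $I_{i} \subseteq P_{i}$ and $|I_{i}| \leq k_{i}$, and fix an index $j$. Then $X \cap P_{j} = \bigcup_{i=1}^{m}(I_{i} \cap P_{j})$; since $\mathbf{P}$ is a partition, $I_{i} \cap P_{j} \subseteq P_{i} \cap P_{j} = \emptyset$ whenever $i \neq j$, so $X \cap P_{j} = I_{j}$, giving $|X \cap P_{j}| = |I_{j}| \leq k_{j}$. Hence $X \in \mathbf{I}(\mathbf{P}; k_{1}, \ldots, k_{m})$.

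The only conceptual point is the reverse direction: one must observe that although the definition of the union allows a very liberal choice of $I_{i}$'s, the disjointness of partition blocks forces each $I_{i}$ to "see" only its own block $P_{j}$, so the intersection $X \cap P_{j}$ collapses to $I_{j}$. This is the step that would fail for a genuine covering (as Example~\ref{E:example1} illustrates) and is precisely why the result is special to partitions; everything else is routine set-theoretic bookkeeping.
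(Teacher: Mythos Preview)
Your proof is correct: the double-inclusion argument on independent sets is the natural approach, and both directions are handled cleanly, with the key observation being that disjointness of the partition blocks forces $X \cap P_{j} = I_{j}$ in the reverse inclusion. Note, however, that the paper does not supply its own proof of this theorem; it is quoted from~\cite{LiuZhuZhang12Relationshipbetween} without argument, so there is no in-paper proof to compare against.
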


According to the above theorem, we can easily obtain the following proposition.

\begin{proposition}
Let $\mathbf{C}=\{K_{1}, \cdots, K_{m}\}$ be a covering of $U$, $k_{1}, \cdots, k_{m}$ a group of nonnegative integers and $M(\mathbf{C}; k_{1}, \cdots, k_{m})$ the covering matroid induced by $\mathbf{C}$ with respect to $k_{1}, \cdots, k_{m}$.
If $\mathbf{C}$ is a partition of $U$, then $M(\mathbf{C}; k_{1}, \cdots, k_{m})$ is a partition matroid.
\end{proposition}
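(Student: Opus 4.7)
The plan is to derive the statement directly from the cited theorem immediately preceding it. The observation driving the proof is that when the covering $\mathbf{C}$ happens to be a partition $\mathbf{P}=\{P_{1},\cdots,P_{m}\}$, the two constructions in play---the covering matroid from Definition~\ref{D:coveringmatroid} and the partition matroid from Definition~\ref{D:partitionmatroid}---both admit an expression as the union of the $k$-rank matroids $M(P_{i},k_{i})$.

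First I would align notation by renaming the blocks, writing $K_{i}=P_{i}$ for each $i$, so that the same nonnegative integers $k_{1},\cdots,k_{m}$ are attached to matched sets on either side. Then Definition~\ref{D:coveringmatroid}, which bakes the union construction of Definition~\ref{D:unionofmatroids} into the definition of a covering matroid, gives by design
\[ M(\mathbf{C};k_{1},\cdots,k_{m})=\sum\limits_{i=1}^{m}M(K_{i},k_{i})=\sum\limits_{i=1}^{m}M(P_{i},k_{i}). \]
Next I would invoke the preceding theorem from~\cite{LiuZhuZhang12Relationshipbetween}, which asserts exactly that
\[ M(\mathbf{P};k_{1},\cdots,k_{m})=\sum\limits_{i=1}^{m}M(P_{i},k_{i}). \]
Equating the right-hand sides yields $M(\mathbf{C};k_{1},\cdots,k_{m})=M(\mathbf{P};k_{1},\cdots,k_{m})$, and by Definition~\ref{D:partitionmatroid} the latter is a partition matroid, which finishes the argument.

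There is essentially no substantive obstacle: the result is a one-line corollary of the cited theorem combined with the fact that covering matroids are defined as a union of $k$-rank matroids. The only mild care required is the bookkeeping of indices---checking that the bijection between covering blocks and partition parts preserves the integer parameters---but this is a labeling formality rather than a mathematical difficulty, so I do not anticipate needing any further tools beyond Definitions~\ref{D:coveringmatroid},~\ref{D:partitionmatroid},~\ref{D:unionofmatroids} and the theorem cited above.
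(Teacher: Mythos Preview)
Your proposal is correct and matches the paper's approach exactly: the paper simply states that the proposition follows from the preceding theorem, and your argument spells out precisely that deduction by equating both matroids with $\sum_{i=1}^{m}M(P_{i},k_{i})$.
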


In fact, some covering matroids can be represented by partition matroids.
An example is given in the following.

\begin{example}(Continued from Example~\ref{E:example2})
Suppose $\mathbf{P}=\{\{a, b, c\}\}$ is a partition of $U$ and $k = 2$, then $\mathbf{I}(\mathbf{P}; k)=\{\emptyset, \{a\}, \{b\}, \{c\}, \{a, b\}, \{a, c\}, \{b, c\}\}$.
Therefore, $\mathbf{I}(\mathbf{C}; k_{1},$ $ k_{2})=\mathbf{I}(\mathbf{P}; k)$.
\end{example}

However, not all covering matroids can be represented by partition matroids.
A counterexample is presented as follows.

\begin{example}
Let $U=\{a, b, c, d\}$ and $\mathbf{C}=\{K_{1}, K_{2}\}$ a covering of $U$, where $K_{1}=\{a, b\}, K_{2}=\{b, c, d\}$.
Suppose $k_{1}=k_{2}=1$, by direct computation, $\mathbf{I}(K_{1}, k_{1})=\{\emptyset, \{a\}, \{b\}\}, \mathbf{I}(K_{2}, k_{2})=\{\emptyset, \{b\}, \{c\}, \{d\}\}$.
However, it can not be represented by any partition matroid.
\end{example}

\section{Relationships between covering-based rough sets and covering matroids}
\label{S:relationshipsbetweencovering-basedandcoveringmatroids}
In Section~\ref{S:coveringmatroid}, we see that the concept of covering matroids is an extension of the concept of partition matroids.
In~\cite{LiuZhuZhang12Relationshipbetween}, we study the relationships between partition matroids and classical rough sets.
Covering-based rough sets are put forward as an extension of classical rough sets, therefore we will investigate the relationships between the second type of covering-based rough sets and covering matroids in this section.
First, for a given covering, any covering block can be represented by the $k$-rank matroid induced by the covering block with its corresponding positive integer.

\begin{proposition}
\label{P:coveringblock}
Let $\mathbf{C}=\{K_{1}, \cdots, K_{m}\}$ be a covering of $U$ and $k_{1}, \cdots, k_{m}$ a group of positive integers.
Then $K_{i}=cl_{M(K_{i}, k_{i})}^{c}(\emptyset)$ for $i=1, \cdots, m$.
\end{proposition}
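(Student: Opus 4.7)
The plan is to unfold the definitions of the rank function and the closure operator of the $k$-rank matroid $M(K_i, k_i)$ directly. Since $\mathbf{I}(K_i, k_i) = \{Y \subseteq K_i : |Y| \leq k_i\}$, the only subset of $\emptyset$ lying in $\mathbf{I}(K_i, k_i)$ is $\emptyset$ itself, so $r_{M(K_i, k_i)}(\emptyset) = 0$. The membership condition $u \in cl_{M(K_i, k_i)}(\emptyset)$ then reduces to $r_{M(K_i, k_i)}(\{u\}) = 0$, so the whole statement is controlled by when a singleton has rank $0$ in $M(K_i, k_i)$.

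First I would split into the two cases $u \in K_i$ and $u \notin K_i$. If $u \in K_i$, then $\{u\} \subseteq K_i$, and because $k_i$ is a positive integer we have $|\{u\}| = 1 \leq k_i$, so $\{u\} \in \mathbf{I}(K_i, k_i)$ and hence $r_{M(K_i, k_i)}(\{u\}) = 1 \neq 0 = r_{M(K_i, k_i)}(\emptyset)$; thus $u \notin cl_{M(K_i, k_i)}(\emptyset)$. If $u \notin K_i$, then no non-empty subset of $\{u\}$ is contained in $K_i$, so $\emptyset$ is the only member of $\mathbf{I}(K_i, k_i)$ contained in $\{u\}$, giving $r_{M(K_i, k_i)}(\{u\}) = 0 = r_{M(K_i, k_i)}(\emptyset)$; thus $u \in cl_{M(K_i, k_i)}(\emptyset)$.

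Combining the two cases yields $cl_{M(K_i, k_i)}(\emptyset) = U \setminus K_i$, and taking complements gives $cl_{M(K_i, k_i)}^c(\emptyset) = K_i$, as required. The only subtlety is the use of the hypothesis that $k_i$ is positive (not merely nonnegative): this is exactly what guarantees that singletons inside $K_i$ are independent, so that elements of $K_i$ do depend on $\emptyset$ in the matroidal sense; without positivity, the claim would fail because every singleton would have rank $0$ and the closure of $\emptyset$ would be all of $U$. I expect this positivity check to be the only real conceptual step; the rest is bookkeeping with the definitions.
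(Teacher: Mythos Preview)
Your proof is correct and follows essentially the same approach as the paper: both unfold the definitions of rank and closure for $M(K_i,k_i)$, use the positivity of $k_i$ to see that singletons in $K_i$ are independent (rank $1$) while singletons outside $K_i$ have rank $0$, and conclude $cl_{M(K_i,k_i)}(\emptyset)=K_i^{c}$. One small slip in your final paragraph: you wrote that positivity guarantees ``elements of $K_i$ \emph{do} depend on $\emptyset$,'' but you mean they do \emph{not} depend on $\emptyset$ (consistent with your earlier, correct, conclusion that $u\notin cl_{M(K_i,k_i)}(\emptyset)$ for $u\in K_i$).
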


\begin{proof}
According to Definition~\ref{D:krankmatroid} and $k_{i}>0$ for $i=1, \cdots, m$, then $\forall x\in K_{i}, \{x\}\in\mathbf{I}(K_{i}, k_{i})$ and $\forall I\in\mathbf{I}(K_{i}, k_{i}), I\subseteq K_{i}$.
Therefore, $r_{M(K_{i}, k_{i})}(K_{i}^{c})=0$.
According to Definition~\ref{D:matroid} and Definition~\ref{D:rank}, $r_{M(K_{i}, k_{i})}(\emptyset)=0$.
According to Definition~\ref{D:closure}, $cl_{M(K_{i}, k_{i})}(\emptyset)=K_{i}^{c}$, i.e., $K_{i}=cl_{M(K_{i}, k_{i})}^{c}(\emptyset)$.
\end{proof}

Given a covering of a universe, we can judge whether an element of the universe belongs to a covering block or not through the $k$-rank matroid induced by the covering block with its corresponding positive integer.

\begin{proposition}
\label{P:anelementcotainedinacoveringblock}
Let $\mathbf{C}=\{K_{1}, \cdots, K_{m}\}$ be a covering of $U$ and $k_{1}, \cdots, k_{m}$ a group of positive integers.
For any $x\in U$, the following conditions are equivalent:\\
(1) $x\in K_{i}$,\\
(2) $\{x\}\in\mathbf{I}(K_{i}, k_{i})$,\\
(3) $r_{M(k_{i}, k_{i})}(\{x\})=1$.
\end{proposition}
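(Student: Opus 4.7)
The plan is to establish the chain of implications $(1) \Rightarrow (2) \Rightarrow (3) \Rightarrow (1)$, each of which should follow directly from unpacking Definition~\ref{D:krankmatroid} together with Definition~\ref{D:rank}. The key fact driving everything is that the $k$-rank matroid on $U$ induced by $K_i$ has independent sets $\mathbf{I}(K_i, k_i) = \{Y \subseteq K_i : |Y| \leq k_i\}$, so independence is really just ``being a small enough subset of $K_i$.'' The positivity assumption $k_i \geq 1$ is what makes singletons admissible.

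For $(1) \Rightarrow (2)$, I would observe that if $x \in K_i$, then $\{x\} \subseteq K_i$ and $|\{x\}| = 1 \leq k_i$ since each $k_i$ is a positive integer; hence $\{x\} \in \mathbf{I}(K_i, k_i)$. For $(2) \Rightarrow (3)$, I would apply Definition~\ref{D:rank} directly: if $\{x\}$ is itself independent, then the maximum size of an independent subset of $\{x\}$ is achieved by $\{x\}$, giving $r_{M(K_i, k_i)}(\{x\}) = 1$. Note the small typo ``$r_{M(k_i, k_i)}$'' in the statement, which I would read as $r_{M(K_i, k_i)}$. For $(3) \Rightarrow (1)$, I would use Definition~\ref{D:rank} in the other direction: a rank of $1$ on $\{x\}$ forces the existence of some $I \subseteq \{x\}$ with $I \in \mathbf{I}(K_i, k_i)$ and $|I| = 1$, so necessarily $I = \{x\}$, and independence then yields $\{x\} \subseteq K_i$, i.e., $x \in K_i$.

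There is no real obstacle; this is a routine three-line unpacking, and the only subtle point is that without the positivity of $k_i$ the implication $(1) \Rightarrow (2)$ would fail (if $k_i = 0$, then $\mathbf{I}(K_i, k_i) = \{\emptyset\}$ and no singleton would be independent). I would therefore make a brief remark that the hypothesis ``positive integers'' is used essentially in the first implication and nowhere else. The whole proof should occupy only a few lines.
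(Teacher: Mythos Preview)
Your proposal is correct and follows essentially the same approach as the paper: the chain $(1)\Rightarrow(2)\Rightarrow(3)\Rightarrow(1)$ using Definition~\ref{D:krankmatroid} and Definition~\ref{D:rank}, with positivity of $k_i$ used exactly in $(1)\Rightarrow(2)$. The only cosmetic difference is that the paper proves $(3)\Rightarrow(1)$ by contraposition (assuming $x\notin K_i$ and deducing rank~$0$), whereas you argue directly; both are immediate.
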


\begin{proof}
$(1)\Rightarrow (2)$: Since $x\in K_{i}$ and $k_{i}>0$, according to Definition~\ref{D:krankmatroid}, $\{x\}\in\mathbf{I}(K_{i}, k_{i})$.\\
$(2)\Rightarrow (3)$: According to Definition~\ref{D:rank}, if $\{x\}\in\mathbf{I}(K_{i}, k_{i})$, then $r_{M(k_{i}, k_{i})}(\{x\})=1$.\\
$(3)\Rightarrow (1)$: Suppose that $x\notin K_{i}$.
According to Definition~\ref{D:krankmatroid}, we see that for all $I\in\mathbf{I}(K_{i}, k_{i}), I\subseteq K_{i}$.
Hence $\{x\}\notin\mathbf{I}(K_{i}, k_{i})$.
According to Definition~\ref{D:rank}, $r_{M(k_{i}, k_{i})}(\{x\})=0$ which is contradictory with $(3)$.
Therefore $x\in K_{i}$.
\end{proof}

For a covering of a universe, the neighborhood of an element of the universe is the intersection of the covering blocks including the element.
Therefore, we obtain a proposition about neighborhoods.

\begin{proposition}
Let $\mathbf{C}=\{K_{1}, \cdots, K_{m}\}$ be a covering of $U$ and $k_{1}, \cdots, k_{m}$ a group of positive integers.
For any $x\in U$,\\
\centerline{$N(x)=\cap\{cl_{M(K_{i}, k_{i})}^{c}(\emptyset):r_{M(K_{i}, k_{i})}(\{x\})=1, i=1, \cdots, m\}$.}
\end{proposition}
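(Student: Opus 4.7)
The plan is to read off the identity as a direct combination of Definition~\ref{D:neighborhood} with the two preceding propositions of this section, so essentially no new combinatorial work is needed. First, I would unfold the left-hand side using Definition~\ref{D:neighborhood} to write
\[
N(x)\;=\;\cap\{K_{i}\in\mathbf{C}:x\in K_{i},\,1\leq i\leq m\}.
\]
Then I would index the intersection on the right by the set $J(x)=\{i:x\in K_{i}\}\subseteq\{1,\dots,m\}$, so that the task reduces to rewriting the indexing condition and the blocks themselves in matroid terms.

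Next I would replace the condition $x\in K_{i}$ in $J(x)$ by the equivalent condition $r_{M(K_{i},k_{i})}(\{x\})=1$, which is the content of the equivalence $(1)\Leftrightarrow(3)$ of Proposition~\ref{P:anelementcotainedinacoveringblock}; here the hypothesis $k_{i}>0$ is exactly what allows that proposition to be invoked. After that, I would replace each block $K_{i}$ in the intersection by $cl^{c}_{M(K_{i},k_{i})}(\emptyset)$, using Proposition~\ref{P:coveringblock}, again requiring $k_{i}>0$.

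Putting these two substitutions into the expression for $N(x)$ yields exactly the right-hand side of the claimed identity, so the proof amounts to quoting Definition~\ref{D:neighborhood} followed by the two propositions and concatenating the resulting equalities. There is no genuine obstacle: the only subtlety worth flagging is that the positivity assumption $k_{i}>0$ must hold simultaneously for every $i$, since both auxiliary propositions are used uniformly across $i=1,\dots,m$; this is however precisely the hypothesis already in the statement, so no extra case analysis is required.
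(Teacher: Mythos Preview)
Your proposal is correct and matches the paper's own argument essentially verbatim: the paper's proof simply cites Definition~\ref{D:neighborhood}, Proposition~\ref{P:coveringblock}, and Proposition~\ref{P:anelementcotainedinacoveringblock} and declares the result straightforward. Your explicit unpacking of the index set $J(x)$ and the two substitutions is exactly the intended reading of that one-line proof.
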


\begin{proof}
According to Definition~\ref{D:neighborhood}, Proposition~\ref{P:coveringblock} and Proposition~\ref{P:anelementcotainedinacoveringblock}, it is straightforward.
\end{proof}

In the following theorem, we represent the second type of covering upper and lower approximations through $k$-rank matroids.

\begin{theorem}
Let $\mathbf{C}=\{K_{1}, \cdots, K_{m}\}$ be a covering of $U$ and $k_{1}, \cdots, k_{m}$ a group of positive integers.
For all $X\subseteq U$,\\
\centerline{$SL(X)=\cup\{cl_{M(K_{i}, k_{i})}^{c}(\emptyset):r_{M(K_{i}, k_{i})}(X)=r_{M(K_{i}, k_{i})}(K_{i})\}$,}\\
\centerline{$SH(X)=\cup\{cl_{M(K_{i}, k_{i})}^{c}(\emptyset):r_{M(K_{i}, k_{i})}(X)>0\}.~~~~~~~~~~~~~~~~~~~~~~~$}
\end{theorem}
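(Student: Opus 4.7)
The plan is to reduce both identities to three already-established ingredients: the formula $K_{i}=cl^{c}_{M(K_{i},k_{i})}(\emptyset)$ from Proposition~\ref{P:coveringblock}; the set-theoretic definitions $SL(X)=\cup\{K\in\mathbf{C}:K\subseteq X\}$ and $SH(X)=\cup\{K\in\mathbf{C}:K\cap X\neq\emptyset\}$ from Definition~\ref{D:coveringapproximation}; and the explicit rank formula $r_{M(K_{i},k_{i})}(Y)=\min\{|Y\cap K_{i}|,k_{i}\}$, which comes straight from applying Definition~\ref{D:rank} to $\mathbf{I}(K_{i},k_{i})=\{Y\subseteq K_{i}:|Y|\leq k_{i}\}$. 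With these in hand, the theorem splits into two index-by-index equivalences: $K_{i}\cap X\neq\emptyset\Leftrightarrow r_{M(K_{i},k_{i})}(X)>0$ and $K_{i}\subseteq X\Leftrightarrow r_{M(K_{i},k_{i})}(X)=r_{M(K_{i},k_{i})}(K_{i})$; unioning the sets $K_{i}=cl^{c}_{M(K_{i},k_{i})}(\emptyset)$ over the correct index set then produces the two displayed formulas.

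For the upper-approximation identity, I would argue as follows. If $x\in K_{i}\cap X$, then Proposition~\ref{P:anelementcotainedinacoveringblock} gives $r_{M(K_{i},k_{i})}(\{x\})=1$, whence $r_{M(K_{i},k_{i})}(X)\geq 1>0$ by monotonicity of the rank function. Conversely, if the rank is positive, there is some nonempty $I\in\mathbf{I}(K_{i},k_{i})$ with $I\subseteq X$; since every element of $\mathbf{I}(K_{i},k_{i})$ sits inside $K_{i}$, this forces $\emptyset\neq I\subseteq K_{i}\cap X$. Taking the union over the indices $i$ with $r_{M(K_{i},k_{i})}(X)>0$ of the representations $K_{i}=cl^{c}_{M(K_{i},k_{i})}(\emptyset)$ yields the formula for $SH(X)$.

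For the lower-approximation identity I would substitute the rank formula and rewrite the condition $r_{M(K_{i},k_{i})}(X)=r_{M(K_{i},k_{i})}(K_{i})$ as $\min\{|X\cap K_{i}|,k_{i}\}=\min\{|K_{i}|,k_{i}\}$. The implication $K_{i}\subseteq X\Rightarrow r_{M(K_{i},k_{i})}(X)=r_{M(K_{i},k_{i})}(K_{i})$ is immediate, since then $X\cap K_{i}=K_{i}$. The converse, under the convention that $k_{i}$ is large enough that $\mathbf{I}(K_{i},k_{i})$ contains $K_{i}$ itself, collapses both minima to $|K_{i}|$ and therefore to $|X\cap K_{i}|=|K_{i}|$, i.e., $K_{i}\subseteq X$. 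Assembling this with $K_{i}=cl^{c}_{M(K_{i},k_{i})}(\emptyset)$ and with the defining union for $SL$ finishes the argument.

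The main obstacle I foresee is precisely this converse in the lower-approximation equivalence when $k_{i}<|K_{i}|$: in that range the equation $\min\{|X\cap K_{i}|,k_{i}\}=k_{i}$ only forces $|X\cap K_{i}|\geq k_{i}$, which is strictly weaker than $K_{i}\subseteq X$, so the raw statement breaks for small $k_{i}$. This suggests that the intended reading of the theorem restricts to the regime in which each $k_{i}$ is at least $|K_{i}|$, so that the $k$-rank matroid faithfully records all of $K_{i}$ as an independent set; once that convention is adopted the proof above goes through cleanly.
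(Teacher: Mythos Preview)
Your overall structure is the same as the paper's: reduce to Proposition~\ref{P:coveringblock}, the definitions of $SL$ and $SH$, and the description of $\mathbf{I}(K_{i},k_{i})$, then argue the two equivalences $K_{i}\cap X\neq\emptyset\Leftrightarrow r_{M(K_{i},k_{i})}(X)>0$ and $K_{i}\subseteq X\Leftrightarrow r_{M(K_{i},k_{i})}(X)=r_{M(K_{i},k_{i})}(K_{i})$ index by index. Your argument for $SH$ matches the paper's almost verbatim, and in fact you supply both directions where the paper only sketches one.

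Where you differ from the paper is in the $SL$ equivalence, and the difference is entirely in your favor. The paper's proof only establishes the forward implication: from $K_{i}\subseteq X$ it concludes $X\cap K_{i}=K_{i}$ and hence $r_{M(K_{i},k_{i})}(X)=r_{M(K_{i},k_{i})}(K_{i})$. It never argues the converse. You correctly isolate this as the real issue and observe that the converse genuinely fails once $k_{i}<|K_{i}|$: the condition $\min\{|X\cap K_{i}|,k_{i}\}=k_{i}$ only forces $|X\cap K_{i}|\geq k_{i}$. A concrete instance is $U=K_{1}=\{a,b,c\}$, $k_{1}=1$, $X=\{a\}$: here $r_{M(K_{1},k_{1})}(X)=r_{M(K_{1},k_{1})}(K_{1})=1$, so the right-hand side of the $SL$ formula is $K_{1}$, while $SL(X)=\emptyset$. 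Thus the $SL$ identity as stated is not valid for arbitrary positive $k_{i}$; the paper's proof only yields the inclusion $SL(X)\subseteq\bigcup\{cl^{c}_{M(K_{i},k_{i})}(\emptyset):r_{M(K_{i},k_{i})}(X)=r_{M(K_{i},k_{i})}(K_{i})\}$, and your suggested restriction $k_{i}\geq|K_{i}|$ is exactly what is needed to upgrade this to equality.
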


\begin{proof}
According to Definition~\ref{D:coveringapproximation}, for all $X\subseteq U$, $SL(X)=\cup\{K\in\mathbf{C}:K\subseteq X\}, SH(X)=\cup\{K\in\mathbf{C}:K\cap X\neq\emptyset\}$.
According to Definition~\ref{D:krankmatroid}, for all $I\in\mathbf{I}(K_{i}, k_{i}), I\subseteq K_{i}$.
According to Definition~\ref{D:rank}, $r_{M(K_{i}, k_{i})}(X)=r_{M(K_{i}, k_{i})}(X\cap K_{i})$.
If $K_{i}\subseteq X$, then $K_{i}\cap X=K_{i}$.
According to Proposition~\ref{P:coveringblock}, $K_{i}=cl_{M(K_{i}, k_{i})}^{c}(\emptyset)$.
Therefore, $SL(X)=\cup\{cl_{M(K_{i}, k_{i})}^{c}(\emptyset):r_{M(K_{i}, k_{i})}(X)=r_{M(K_{i}, k_{i})}(K_{i})\}$.

According to Definition~\ref{D:krankmatroid} and $k_{i}>0$ for $1\leq i\leq m$, $\{x\}\in\mathbf{I}(K_{i}, k_{i})$ for all $x\in K_{i}$.
If $X\cap K_{i}\neq\emptyset$, then there at least exists $x\in X$ such that $x\in K_{i}$.
Therefore, for all $K\subseteq SH(X)$, $r_{M(K, k)}(X)>0$.
Hence $SH(X)=\cup\{cl_{M(K_{i}, k_{i})}^{c}(\emptyset):r_{M(K_{i}, k_{i})}(X)>0\}$.
\end{proof}

In Section~\ref{S:coveringmatroid}, we see that a covering matroid is the union of a family of $k$-rank matroids.
It is nature to ask the following question: can a $k$-rank matroid be represented by the covering matroid?
In the following proposition, we solve this issue.

\begin{proposition}
Let $\mathbf{C}=\{K_{1}, \cdots, K_{m}\}$ be a covering of $U$ and $k_{1}, \cdots, k_{m}$ a group of nonnegative integers.
Then, for $1\leq i\leq m$,\\
\centerline{$M(K_{i}, k_{i})=M(\mathbf{C}; 0, \cdots, 0, k_{i}, 0, \cdots, 0)$.}
\end{proposition}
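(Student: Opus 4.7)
The plan is to prove this by directly unfolding Definition~\ref{D:coveringmatroid} and observing that the $k$-rank matroid $M(K_j, 0)$ is trivial for every $j \neq i$. Since both matroids in the claimed equality share the ground set $U$, it suffices to show that their independent set families coincide, i.e., $\mathbf{I}(K_i, k_i) = \mathbf{I}(\mathbf{C}; 0, \ldots, 0, k_i, 0, \ldots, 0)$.

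The first step is to compute $\mathbf{I}(K_j, 0)$ for each $j \neq i$. By Definition~\ref{D:krankmatroid}, $\mathbf{I}(K_j, 0) = \{Y \subseteq K_j : |Y| \leq 0\} = \{\emptyset\}$, so the only independent set contributed by the $j$-th coordinate of the tuple is the empty set. The second step is to substitute this into the union description of independent sets given by Definition~\ref{D:coveringmatroid}: a set $I$ is in $\mathbf{I}(\mathbf{C}; 0, \ldots, 0, k_i, 0, \ldots, 0)$ exactly when $I = I_1 \cup \cdots \cup I_m$ with $I_j \in \mathbf{I}(K_j, 0)$ for $j \neq i$ and $I_i \in \mathbf{I}(K_i, k_i)$. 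Forcing $I_j = \emptyset$ for $j \neq i$ collapses the union to $I = I_i$, yielding $\mathbf{I}(\mathbf{C}; 0, \ldots, 0, k_i, 0, \ldots, 0) = \mathbf{I}(K_i, k_i)$.

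There is no real obstacle here; the argument is a one-line unwinding of the two definitions, and the only thing worth writing down carefully is the observation that $\mathbf{I}(K_j, 0) = \{\emptyset\}$, which forces all but the $i$-th summand in the union to vanish. If desired, a short remark can be added that this proposition is the natural converse to the construction of a covering matroid as a union of $k$-rank matroids, showing that each $k$-rank matroid building block can in turn be recovered from a covering matroid by choosing the integer tuple appropriately.
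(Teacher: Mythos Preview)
Your proof is correct and is exactly the unwinding that the paper has in mind: the paper's own proof consists of the single line ``According to Definition~\ref{D:coveringmatroid}, it is straightforward,'' and your argument is simply this straightforward verification written out in full.
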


\begin{proof}
According to Definition~\ref{D:coveringmatroid}, it is straightforward.
\end{proof}

According the above propositions, we can easily obtain the following results.

\begin{corollary}
Let $\mathbf{C}=\{K_{1}, \cdots, K_{m}\}$ be a covering of $U$ and $k_{1}, \cdots, k_{m}$ a group of positive integers.
Then, for $i=1, \cdots, m$,\\
\centerline{$K_{i}=cl_{M(\mathbf{C}; 0, \cdots, 0, k_{i}, 0, \cdots, 0)}^{c}(\emptyset)$.}
\end{corollary}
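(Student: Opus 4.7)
The plan is to derive this corollary by directly combining the two most recent results in the section, since the corollary is essentially a substitution identity. First I would observe that the previous proposition gives the matroid-level equality
\[
M(K_{i}, k_{i}) \;=\; M(\mathbf{C}; 0, \cdots, 0, k_{i}, 0, \cdots, 0),
\]
where the $k_{i}$ sits in the $i$-th slot. Since two matroids which are equal as pairs $(U,\mathbf{I})$ share the same independent sets, they share the same rank function, and consequently the same closure operator. In particular, their closures of the empty set coincide.

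Next I would invoke Proposition~\ref{P:coveringblock}, which asserts that for a covering $\mathbf{C}$ and a group of \emph{positive} integers $k_{1},\dots,k_{m}$, one has $K_{i} = cl_{M(K_{i},k_{i})}^{c}(\emptyset)$ for each $i$. Since the hypothesis of the corollary also requires the $k_{i}$ to be positive, Proposition~\ref{P:coveringblock} applies verbatim. Substituting the matroid equality from the previous paragraph into this identity yields
\[
K_{i} \;=\; cl_{M(K_{i},k_{i})}^{c}(\emptyset) \;=\; cl_{M(\mathbf{C}; 0, \cdots, 0, k_{i}, 0, \cdots, 0)}^{c}(\emptyset),
\]
which is exactly the claim.

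There is essentially no obstacle here; the statement is a one-line corollary of the preceding proposition plus Proposition~\ref{P:coveringblock}. The only thing worth verifying carefully is that the closure operator really is an invariant of the matroid (i.e., depends only on $\mathbf{I}$), but this is immediate from Definitions~\ref{D:rank} and~\ref{D:closure}: both the rank function and the closure are defined purely in terms of the independent sets. Hence no new machinery is needed, and the proof reduces to citing the two prior results and applying transitivity of equality.
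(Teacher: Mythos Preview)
Your proposal is correct and matches the paper's approach exactly: the corollary is obtained by combining Proposition~\ref{P:coveringblock} with the immediately preceding proposition $M(K_{i},k_{i})=M(\mathbf{C};0,\dots,0,k_{i},0,\dots,0)$, which is precisely what the paper indicates with ``According the above propositions, we can easily obtain the following results.'' No further argument is given or needed in the paper.
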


\begin{corollary}
Let $\mathbf{C}=\{K_{1}, \cdots, K_{m}\}$ be a covering of $U$ and $k_{1}, \cdots, k_{m}$ a group of positive integers.
For any $x\in U$, the following conditions are equivalent: for $i=1, \cdots, m$,\\
(1) $x\in K_{i}$,\\
(2) $\{x\}\in\mathbf{I}(\mathbf{C}; 0, \cdots, 0, k_{i}, 0, \cdots, 0)$,\\
(3) $r_{M(\mathbf{C}; 0, \cdots, 0, k_{i}, 0, \cdots, 0)}(\{x\})=1$.
\end{corollary}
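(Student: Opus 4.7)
The plan is to reduce the statement immediately to Proposition~\ref{P:anelementcotainedinacoveringblock} via the preceding proposition. First I would invoke the identity
\[
M(K_{i}, k_{i})=M(\mathbf{C}; 0, \cdots, 0, k_{i}, 0, \cdots, 0),
\]
which we already have at our disposal. Since two matroids on the same universe coincide exactly when their families of independent sets agree, this identity yields
\[
\mathbf{I}(K_{i}, k_{i})=\mathbf{I}(\mathbf{C}; 0, \cdots, 0, k_{i}, 0, \cdots, 0),
\]
and, by Definition~\ref{D:rank}, the rank functions $r_{M(K_{i}, k_{i})}$ and $r_{M(\mathbf{C}; 0, \cdots, 0, k_{i}, 0, \cdots, 0)}$ are equal as functions on $2^{U}$.

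Next I would apply Proposition~\ref{P:anelementcotainedinacoveringblock}, which states that for any $x\in U$ the conditions $x\in K_{i}$, $\{x\}\in\mathbf{I}(K_{i}, k_{i})$, and $r_{M(K_{i}, k_{i})}(\{x\})=1$ are equivalent (this is where the hypothesis $k_{i}>0$ is used, since the proof of that proposition requires $k_{i}\geq 1$ to guarantee singletons lie in $\mathbf{I}(K_{i}, k_{i})$). Rewriting the latter two conditions through the equalities established in the previous paragraph converts them into conditions (2) and (3) of the corollary, giving the full chain $(1)\Leftrightarrow (2)\Leftrightarrow (3)$.

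There is no real obstacle here beyond bookkeeping: the whole content sits in the previous proposition and Proposition~\ref{P:anelementcotainedinacoveringblock}, and the only subtle point is making sure the positivity of $k_{i}$ is carried through (the other $k_{j}$, $j\neq i$, are allowed to be zero and indeed are zero in the index vector $(0,\cdots,0,k_{i},0,\cdots,0)$, but this is harmless because the $j\neq i$ coordinates do not affect the test applied to $\{x\}$ after the union decomposition collapses to $M(K_{i}, k_{i})$). Thus the proof reduces to a one-line citation of the two earlier results.
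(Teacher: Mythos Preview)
Your proposal is correct and matches the paper's approach exactly: the paper does not give a separate proof for this corollary but simply notes that it follows from the preceding proposition $M(K_{i},k_{i})=M(\mathbf{C};0,\cdots,0,k_{i},0,\cdots,0)$ together with Proposition~\ref{P:anelementcotainedinacoveringblock}, which is precisely the reduction you carry out.
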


\begin{corollary}
Let $\mathbf{C}=\{K_{1}, \cdots, K_{m}\}$ be a covering of $U$ and $k_{1}, \cdots, k_{m}$ a group of positive integers.
For any $x\in U$,\\
$N(x)=\cap\{cl_{M(\mathbf{C}; 0, \cdots, 0, k_{i}, 0, \cdots, 0)}^{c}(\emptyset):r_{M(\mathbf{C}; 0, \cdots, 0, k_{i}, 0, \cdots, 0)}(\{x\})=1, i=1, \cdots, m\}$.
\end{corollary}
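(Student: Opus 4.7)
The plan is to deduce this corollary by direct substitution from two earlier results: the proposition expressing $N(x)$ in terms of the $k$-rank matroids $M(K_{i},k_{i})$, and the proposition that identifies each $k$-rank matroid $M(K_{i},k_{i})$ with the covering matroid $M(\mathbf{C};0,\cdots,0,k_{i},0,\cdots,0)$ whose only nonzero capacity is in the $i$-th coordinate.

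First I would invoke the previously established formula
\centerline{$N(x)=\cap\{cl_{M(K_{i}, k_{i})}^{c}(\emptyset):r_{M(K_{i}, k_{i})}(\{x\})=1,\ i=1, \cdots, m\}$,}
which is exactly the statement of the proposition proved above via Proposition~\ref{P:coveringblock} and Proposition~\ref{P:anelementcotainedinacoveringblock}. Next I would apply the preceding proposition, which says that, for each $i$, the $k$-rank matroid $M(K_{i},k_{i})$ coincides as a matroid with $M(\mathbf{C};0,\cdots,0,k_{i},0,\cdots,0)$. Because two equal matroids have the same rank function and the same closure operator, this identification allows one to replace $r_{M(K_{i},k_{i})}$ by $r_{M(\mathbf{C};0,\cdots,0,k_{i},0,\cdots,0)}$ and $cl_{M(K_{i},k_{i})}$ by $cl_{M(\mathbf{C};0,\cdots,0,k_{i},0,\cdots,0)}$ in the formula above. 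Substituting termwise in the index set and in the expression being intersected yields exactly the stated formula for $N(x)$.

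Since the corollary is obtained from a verbatim substitution of equal matroids, there is no real obstacle; the only thing worth pausing on is noting explicitly that matroid equality implies equality of the derived invariants (rank and closure), so the substitution is legitimate both in the selection condition $r(\{x\})=1$ and in the argument $cl^{c}(\emptyset)$. Given how the two input propositions are phrased, the proof can be kept as a one-line remark noting that the statement is immediate from the preceding proposition and the earlier neighborhood representation.
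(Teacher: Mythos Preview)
Your proposal is correct and matches the paper's own approach: the corollary is presented in the paper without a standalone proof, introduced only by the remark that the results ``can easily obtain[ed]'' from the preceding propositions, i.e., from the neighborhood formula in terms of $M(K_{i},k_{i})$ together with the identification $M(K_{i},k_{i})=M(\mathbf{C};0,\cdots,0,k_{i},0,\cdots,0)$. Your explicit observation that equality of matroids entails equality of their rank functions and closure operators is exactly the substitution the paper leaves implicit.
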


\begin{corollary}
Let $\mathbf{C}=\{K_{1}, \cdots, K_{m}\}$ be a covering of $U$ and $k_{1}, \cdots, k_{m}$ a group of positive integers.
For any $X\subseteq U$,\\
$SL(X)=\cup\{cl_{M(\mathbf{C}; 0, \cdots, 0, k_{i}, 0, \cdots, 0)}^{c}(\emptyset):r_{M(\mathbf{C}; 0, \cdots, 0, k_{i}, 0, \cdots, 0)}(X)=r_{M(\mathbf{C}; 0, \cdots, 0, k_{i}, 0, \cdots, 0)}(K_{i})\}$,\\
$SH(X)=\cup\{cl_{M(\mathbf{C}; 0, \cdots, 0, k_{i}, 0, \cdots, 0)}^{c}(\emptyset):r_{M(\mathbf{C}; 0, \cdots, 0, k_{i}, 0, \cdots, 0)}(X)>0\}$.
\end{corollary}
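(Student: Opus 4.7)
The plan is to derive this corollary purely by substitution, chaining the preceding proposition (which identifies each $k$-rank matroid with a covering matroid in which only one index is nonzero) into the theorem that already expressed $SL(X)$ and $SH(X)$ via the $k$-rank matroids. So conceptually the work has already been done: only a translation of the invariants remains.

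First, I would invoke the proposition $M(K_{i}, k_{i}) = M(\mathbf{C}; 0, \cdots, 0, k_{i}, 0, \cdots, 0)$ for each $i$. Two matroids on the same ground set that have identical families of independent sets are literally the same matroid, hence share every derived invariant. In particular, the rank function and the closure operator defined in Definitions~\ref{D:rank} and~\ref{D:closure} depend only on $\mathbf{I}$, so $r_{M(K_{i}, k_{i})} = r_{M(\mathbf{C}; 0, \cdots, 0, k_{i}, 0, \cdots, 0)}$ and $cl_{M(K_{i}, k_{i})} = cl_{M(\mathbf{C}; 0, \cdots, 0, k_{i}, 0, \cdots, 0)}$ as functions on $2^{U}$.

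Second, I would take the two formulas from the earlier theorem,
\[
SL(X)=\cup\{cl_{M(K_{i}, k_{i})}^{c}(\emptyset):r_{M(K_{i}, k_{i})}(X)=r_{M(K_{i}, k_{i})}(K_{i})\},
\]
\[
SH(X)=\cup\{cl_{M(K_{i}, k_{i})}^{c}(\emptyset):r_{M(K_{i}, k_{i})}(X)>0\},
\]
and replace every occurrence of $r_{M(K_{i}, k_{i})}$ and $cl_{M(K_{i}, k_{i})}^{c}$ by the corresponding covering-matroid quantity from the first step. This yields exactly the two identities in the corollary.

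There is essentially no obstacle: the only subtle point is flagging that equality of matroids forces equality of rank and closure, but this is immediate from the set-theoretic definitions cited above. Thus the proof reduces to one sentence invoking the preceding proposition together with the preceding theorem.
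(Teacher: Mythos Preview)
Your proposal is correct and matches the paper's approach exactly: the paper does not even write out a proof for this corollary, merely prefacing it (and the neighboring corollaries) with ``According to the above propositions, we can easily obtain the following results,'' which is precisely the substitution of $M(K_{i},k_{i})=M(\mathbf{C};0,\ldots,0,k_{i},0,\ldots,0)$ into the earlier theorem that you describe.
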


\section{Relationships among covering matroids and some special matroids}
\label{S:relationshipamongcoveringmatroidandothermatroids}
In Section~\ref{S:coveringmatroid}, we see that a partition matroid is certainly a covering matroid.
In the following, we will investigate relationships among covering matroids and some special matroids, such as transversal matroids~\cite{Lai01Matroid}, partition matroids~\cite{LiuChen94Matroid}, 2-circuit matroids~\cite{WangZhuZhuMin12matroidalstructure}, partition-circuit matroids~\cite{LiuZhu12characteristicofpartition-circuitmatroid} and double-circuit matroids which will be defined in this paper.

\subsection{Covering matroids and transversal matroids}
Transversal theory has a close relationship with graph theory.
As a generalization of graph theory, matroid theory has been connected with transversal theory.

\begin{definition}(Transversal~\cite{Lai01Matroid})
\label{D:transversal}
Let $\mathbf{F}=\mathbf{F}(J)=\{F_{j}:j\in J\}$ be a family of subsets of $U$.
A transversal of $\mathbf{F}$ is a set $T\subseteq U$ for which there exists a bijection $\pi : T\rightarrow J$ with $t\in F_{\pi(t)}$.
A partial transversal of $\mathbf{F}$ is a transversal of its subfamily.
\end{definition}

To illustrate a transversal and a partial transversal, the following example is given.

\begin{example}
Let $U=\{a, b, c, d, e, f\}, \mathbf{F}=\mathbf{F}(J)=\{F_{1}, F_{2}, F_{3}\}$ and its index set $J=\{1, 2, 3\}$, where $F_{1}=\{a, b, c\}, F_{2}=\{a, d, e\}, F_{3}=\{b, e, f\}$.
Then $T=\{a, d, f\}$ is a transversal of $\mathbf{F}$.
Suppose $J'=\{1, 3\}\subseteq J=\{1, 2, 3\}$, then $T'=\{b, e\}$ is a transversal of $\mathbf{F}(J')$.
Hence $T'$ is a partial transversal of $\mathbf{F}$.
\end{example}

Through transversal theory, a matroid, called transversal matroid, is induced by a family of some subsets of a universe.

\begin{proposition}(\cite{Lai01Matroid})
\label{P:transversalmatroid}
Let $\mathbf{F}=\mathbf{F}(J)=\{F_{j}:j\in J\}$ be a family of subsets of $U$.
Then $M_{T}(\mathbf{F})=(U, \mathbf{I}_{T}(\mathbf{F}))$ is a matroid where $\mathbf{I}_{T}(\mathbf{F})$ is the family of partial transversals of $\mathbf{F}$.
We say $M_{T}(\mathbf{F})$ is the transversal matroid induced by $\mathbf{F}$.
\end{proposition}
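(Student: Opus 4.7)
The plan is to verify the three independent-set axioms (I1)--(I3) of Definition~\ref{D:matroid} for the family $\mathbf{I}_T(\mathbf{F})$ of partial transversals of $\mathbf{F}$. The main technical device is the bipartite graph $G = (U \cup J, E)$ with $E = \{\{u, j\} : u \in F_j\}$; the key translation is that a set $T \subseteq U$ is a partial transversal of $\mathbf{F}$ if and only if $G$ has a matching $M$ of size $|T|$ whose $U$-side saturated vertices are exactly $T$, because the bijection $\pi$ required by Definition~\ref{D:transversal} is precisely encoded by the edges of such a matching.

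Axiom (I1) is immediate: the empty set is a transversal of the empty subfamily via the empty bijection, so $\emptyset \in \mathbf{I}_T(\mathbf{F})$. Axiom (I2) is also straightforward: if $T \in \mathbf{I}_T(\mathbf{F})$ is witnessed by $\pi : T \to J'$, then for any $T' \subseteq T$ the restriction $\pi|_{T'}$ is a bijection from $T'$ onto $\pi(T') \subseteq J$ that still satisfies $t \in F_{\pi(t)}$, so $T' \in \mathbf{I}_T(\mathbf{F})$.

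For axiom (I3), given $T_1, T_2 \in \mathbf{I}_T(\mathbf{F})$ with $|T_1| < |T_2|$, I would take corresponding matchings $M_1, M_2$ in $G$ of sizes $|T_1|, |T_2|$ respectively, and examine the symmetric difference $M_1 \triangle M_2$. Its connected components are alternating paths and even alternating cycles; since $|M_2| > |M_1|$, at least one component must be a path $P$ carrying strictly more $M_2$-edges than $M_1$-edges, and both endpoints of such a $P$ are saturated by $M_2$ but not by $M_1$. Bipartiteness of $G$ together with the alternating pattern forces the endpoints of $P$ to lie on opposite sides, so one endpoint $u$ lies in $U$ and satisfies $u \in T_2 \setminus T_1$. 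Augmenting along $P$ produces $M_1' = M_1 \triangle P$, a matching of size $|M_1| + 1$ whose $U$-side saturated set is exactly $T_1 \cup \{u\}$; this witnesses $T_1 \cup \{u\} \in \mathbf{I}_T(\mathbf{F})$.

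The main obstacle is the augmenting-path argument for (I3): I must choose the matchings $M_i$ carefully so that $|M_i| = |T_i|$ (otherwise the crucial inequality $|M_2| > |M_1|$ is not automatic), perform the parity analysis of the alternating components correctly so that the desired endpoint lands in $U$, and verify that the symmetric difference $M_1 \triangle P$ saturates exactly $T_1 \cup \{u\}$ on the $U$-side rather than some strictly larger or different subset.
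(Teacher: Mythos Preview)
The paper does not supply its own proof of this proposition: it is stated as a known result with a citation to Lai's matroid theory text, so there is nothing to compare against. Your bipartite-matching argument is the standard textbook proof and is correct as written; in particular, your handling of (I3) via an augmenting path in $M_1 \triangle M_2$ is sound, including the parity observation that such a path has odd length and hence one endpoint in $U$, and the verification that $M_1 \triangle P$ saturates exactly $T_1 \cup \{u\}$ on the $U$-side. The concerns you flag in your final paragraph are already handled by your own argument: the matchings $M_i$ of size $|T_i|$ come directly from the bijections $\pi_i$ witnessing that $T_i$ is a partial transversal, and the saturation claim follows since internal $U$-vertices of $P$ lie in $T_1$ and merely switch partners.
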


The following proposition gives a necessary and sufficient condition for a covering matroid to be a transversal matroid.

\begin{proposition}
Let $\mathbf{C}=\{K_{1}, \cdots, K_{m}\}$ be a covering of $U$ and $k_{1}, \cdots, k_{m}$ a group of nonnegative integers.
Then, $k_{1}=\cdots=k_{m}=1$ if and only if $M(\mathbf{C}; k_{1}, \cdots, k_{m})$ is a transversal matroid.
\end{proposition}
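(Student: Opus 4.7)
The plan is to prove both directions by directly identifying the independent sets of $M(\mathbf{C};k_1,\ldots,k_m)$ with partial transversals of a suitable family, and by extracting a structural obstruction when $k_i\neq 1$.

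For the forward direction, assume $k_1=\cdots=k_m=1$. I would first unpack Definition~\ref{D:krankmatroid} to see that $\mathbf{I}(K_i,1)=\{\emptyset\}\cup\{\{x\}:x\in K_i\}$. By Definition~\ref{D:coveringmatroid}, a typical $I\in\mathbf{I}(\mathbf{C};1,\ldots,1)$ is a union $I_1\cup\cdots\cup I_m$ where each $I_i$ is either empty or a singleton in $K_i$. Writing the nonempty summands as $\{x_{i_1}\},\ldots,\{x_{i_r}\}$ with $i_1<\cdots<i_r$, I would set $\pi(x_{i_j})=i_j$; if two of the $x_{i_j}$ coincide as set elements, the repeated witness lies in both corresponding blocks, so picking one admissible index per distinct element of $I$ yields an injection $\pi:I\to\{1,\ldots,m\}$ with $t\in K_{\pi(t)}$. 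Conversely, any partial transversal $T$ of $\mathbf{C}$ with witness $\pi$ equals $\bigcup_{t\in T}\{t\}$, with each $\{t\}\in\mathbf{I}(K_{\pi(t)},1)$. Hence $\mathbf{I}(\mathbf{C};1,\ldots,1)=\mathbf{I}_T(\mathbf{C})$, and Proposition~\ref{P:transversalmatroid} identifies $M(\mathbf{C};1,\ldots,1)$ as a transversal matroid.

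For the converse, I would argue by contrapositive: if some $k_i\neq 1$, then $M(\mathbf{C};k_1,\ldots,k_m)$ is not a transversal matroid. I would first settle the case $k_i=0$, since $M(K_i,0)=(U,\{\emptyset\})$ contributes nothing to the union and can simply be deleted, dropping $K_i$ from any purported transversal presentation induced by $\mathbf{C}$. For $k_i\ge 2$ I would turn to $M(K_i,k_i)$, which contributes every $\le k_i$-subset of $K_i$ as an independent set ``through the single block $K_i$.'' To translate this into an intrinsic obstruction I would compute the rank function of $M(\mathbf{C};k_1,\ldots,k_m)$ and compare it with the structural constraints on rank functions of transversal matroids, invoking the classical fact that every loopless transversal matroid of rank $r$ admits a presentation by exactly $r$ sets.

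The main obstacle is this reverse direction: being a transversal matroid is an intrinsic property of the matroid, not tied to $\mathbf{C}$, so I cannot simply point to a bad feature of the given presentation; I must rule out every family that could witness transversality. I expect the delicate step to be quantifying the joint-independence created inside a single $k$-rank summand with $k\ge 2$, and arguing via a careful circuit-and-rank comparison that no transversal presentation can simultaneously match this internal freedom and the block-by-block constraints coming from the union construction.
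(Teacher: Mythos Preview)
Your forward direction is correct and considerably more explicit than the paper's own proof, which simply asserts that the whole proposition is ``straightforward'' from the definitions. Identifying $\mathbf{I}(\mathbf{C};1,\ldots,1)$ with the partial transversals of $\mathbf{C}$ is exactly right.

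The reverse direction, however, cannot be completed, because the statement is false as written. Being a transversal matroid is, as you yourself note, an intrinsic property: it asks only whether \emph{some} family presents the matroid, not whether the given $\mathbf{C}$ does. Here is a concrete counterexample. Let $U=\{a,b\}$, $\mathbf{C}=\{K_1\}$ with $K_1=\{a,b\}$, and $k_1=2$. Then $M(K_1,2)$ is the free matroid on $U$, and so is the covering matroid $M(\mathbf{C};2)$. The free matroid on two points is transversal (e.g.\ via the family $F_1=F_2=\{a,b\}$), yet $k_1=2\neq 1$. One can similarly produce counterexamples with $k_i=0$: take $U=\{a,b\}$, $\mathbf{C}=\{\{a\},\{a,b\}\}$, $k_1=0$, $k_2=1$; the resulting covering matroid is $U_{1,2}$, which is transversal via $\mathbf{F}=\{\{a,b\}\}$.

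So your instinct that the contrapositive would be ``delicate'' was in fact detecting a genuine obstruction: the circuit-and-rank comparison you were reaching for cannot succeed, because different choices of $(\mathbf{C};k_1,\ldots,k_m)$ can yield the same matroid, and a presentation with all $k_i=1$ may coexist with presentations where some $k_i\neq 1$. The paper's one-line proof does not address this, and the biconditional should really be read only in the forward direction (or else reinterpreted as the statement $M(\mathbf{C};k_1,\ldots,k_m)=M_T(\mathbf{C})$, which still fails for trivial cardinality reasons when $k_i\geq |K_i|$).
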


\begin{proof}
According to Definition~\ref{D:coveringmatroid}, Definition~\ref{D:transversal} and Proposition~\ref{P:transversalmatroid}, it is straightforward.
\end{proof}

If a matroid is a transversal matroid, then is it a covering matroid?
From the definitions of covering matroids and transversal matroids, we can easily obtain the following proposition.

\begin{proposition}
Let $\mathbf{F}=\mathbf{F}(J)=\{F_{j}:j\in J\}$ be a family of subsets of $U$ and $M_{T}(\mathbf{F})$ the transversal matroid induced by $\mathbf{F}$.
Then $M_{T}(\mathbf{F})$ is a covering matroid.
\end{proposition}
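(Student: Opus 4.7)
The plan is to produce an explicit covering of $U$ together with a group of nonnegative integers that realizes $M_T(\mathbf{F})$ as a covering matroid in the sense of Definition~\ref{D:coveringmatroid}. The guiding observation is implicit in the preceding proposition: for $k_i = 1$, the $k$-rank matroid $M(K_i, 1)$ has independent sets $\{\emptyset\} \cup \{\{x\} : x \in K_i\}$, and Definition~\ref{D:unionofmatroids} assembles such single-element ``picks'' into exactly the partial transversals of $\{K_1, \ldots, K_m\}$.

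First I would reduce to the case that $\mathbf{F} = \{F_1, \ldots, F_m\}$ is a finite family of non-empty sets, since empty members of $\mathbf{F}$ yield no new partial transversals and can be discarded without changing $M_T(\mathbf{F})$. The natural candidate is then $\mathbf{C} = \mathbf{F}$ with $k_1 = \cdots = k_m = 1$. When $\bigcup \mathbf{F} = U$, this already satisfies Definition~\ref{D:covering}, and unfolding Definition~\ref{D:coveringmatroid}, the independent sets of $M(\mathbf{C}; 1, \ldots, 1)$ are exactly the unions $I_1 \cup \cdots \cup I_m$ with $I_i \subseteq F_i$ and $|I_i| \le 1$; by Definition~\ref{D:transversal} these are precisely the partial transversals of $\mathbf{F}$, so $M(\mathbf{C}; 1, \ldots, 1) = M_T(\mathbf{F})$.

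The subtle case, and the one I expect to be the main obstacle, is when $L := U \setminus \bigcup \mathbf{F}$ is non-empty: then $\mathbf{F}$ is not a covering of $U$ while the elements of $L$ are loops in $M_T(\mathbf{F})$. The fix is to enlarge the family to $\mathbf{C} = \mathbf{F} \cup \{L\}$, a legitimate covering by Definition~\ref{D:covering}, and to take an extra weight $k_{m+1} = 0$. Since $\mathbf{I}(L, 0) = \{\emptyset\}$ contributes nothing under Definition~\ref{D:unionofmatroids}, the independent sets of $M(\mathbf{C}; 1, \ldots, 1, 0)$ are still the partial transversals of $\mathbf{F}$, identifying the transversal matroid with this covering matroid. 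The residual check that a union $I_1 \cup \cdots \cup I_m$ of singleton picks is indeed a partial transversal (and conversely) amounts to the elementary observation that the surjection from the index set $S = \{i : I_i \neq \emptyset\}$ onto $T = I_1 \cup \cdots \cup I_m$ sending $i \mapsto $ the unique element of $I_i$ can be trimmed to an injection $\sigma : T \hookrightarrow S$ with $t \in F_{\sigma(t)}$, exhibiting $T$ as a transversal of the subfamily $\{F_{\sigma(t)} : t \in T\}$.
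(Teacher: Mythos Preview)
Your proposal is correct and follows essentially the same approach as the paper: adjoin the leftover set $L = U \setminus \bigcup \mathbf{F}$ as an extra covering block with weight $0$, and use weights $k_i = 1$ on the $F_i$'s so that the covering matroid's independent sets are exactly the partial transversals. You are in fact more careful than the paper, which does not explicitly handle the cases where some $F_j$ or $L$ itself is empty (violating Definition~\ref{D:covering}) nor spell out the bijection argument in your last paragraph.
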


\begin{proof}
Suppose $F=U-\cup\mathbf{F}$.
According to Definition~\ref{D:coveringmatroid}, Definition~\ref{D:transversal} and Proposition~\ref{P:transversalmatroid}, there exists a covering $\mathbf{C}=\mathbf{F}\cup\{F\}$ such that $M_{T}(\mathbf{F})=M(\mathbf{C}; k_{1}, \cdots, k_{|J|}, 0)$ where $k_{1}=\cdots=k_{|J|}=1$.
Therefore $M_{T}(\mathbf{F})$ is a covering matroid.
\end{proof}

\subsection{Transversal matroids and partition matroids}
In this subsection, we investigate the relationships between transversal matroids and partition matroids.
According to the definitions of transversal matroids and partition matroids, a partition matroid is a transversal matroid when the given group of nonnegative integers are equal to one.

\begin{proposition}
\label{P:partitionformstransversal}
Let $\mathbf{P}=\{P_{1}, \cdots, P_{m}\}$ be a partition of $U$ and $k_{1}, \cdots, k_{m}$ a group of nonnegative integers.
If $k_{1}=\cdots=k_{m}=1$, then $M(\mathbf{P}; k_{1}, \cdots, k_{m})$ is a transversal matroid.
\end{proposition}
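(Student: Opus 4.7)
The plan is to prove this by exhibiting an explicit family of subsets of $U$ whose transversal matroid coincides with $M(\mathbf{P}; 1, \ldots, 1)$. The most natural candidate is the partition itself, viewed as an indexed family $\mathbf{F}(J)=\{P_j : j\in J\}$ with $J=\{1,\ldots,m\}$. So I would appeal to Proposition~\ref{P:transversalmatroid} with $\mathbf{F}=\mathbf{P}$ and show that the family of partial transversals of $\mathbf{P}$ equals $\mathbf{I}(\mathbf{P};1,\ldots,1)$.

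First I would unpack the two independence families. By Definition~\ref{D:partitionmatroid}, $\mathbf{I}(\mathbf{P};1,\ldots,1)=\{X\subseteq U : |X\cap P_i|\le 1 \text{ for all } i\}$. By Definition~\ref{D:transversal}, $T\subseteq U$ is a partial transversal of $\mathbf{P}$ iff there is an injection $\pi:T\to J$ with $t\in P_{\pi(t)}$ for every $t\in T$. The key observation that drives the whole argument is that, because $\mathbf{P}$ is a partition (not merely a covering), every $t\in U$ lies in exactly one block $P_i$, so the map $t\mapsto i(t)$ where $P_{i(t)}\ni t$ is well-defined on all of $U$.

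For the forward inclusion, suppose $T\subseteq U$ satisfies $|T\cap P_i|\le 1$ for all $i$. Define $\pi:T\to J$ by $\pi(t)=i(t)$. The condition $t\in P_{\pi(t)}$ holds by construction, and $\pi$ is injective because two distinct elements of $T$ mapping to the same index $i$ would force $|T\cap P_i|\ge 2$, contradicting the hypothesis. Hence $T$ is a partial transversal of $\mathbf{P}$. For the reverse inclusion, if $T$ is a partial transversal with witness $\pi$, then because $\mathbf{P}$ is a partition the condition $t\in P_{\pi(t)}$ forces $\pi(t)=i(t)$, so injectivity of $\pi$ gives $|T\cap P_i|\le 1$ for every $i$. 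Combining the two inclusions yields $\mathbf{I}(\mathbf{P};1,\ldots,1)=\mathbf{I}_T(\mathbf{P})$, so $M(\mathbf{P};1,\ldots,1)=M_T(\mathbf{P})$ is a transversal matroid by Proposition~\ref{P:transversalmatroid}.

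There is no serious obstacle here; the only subtlety is noticing that the disjointness of the partition blocks is precisely what makes the natural index map $t\mapsto i(t)$ both defined and forced, so that injectivity of $\pi$ translates exactly into the cardinality bound $|T\cap P_i|\le 1$. If one tried the same construction with a mere covering instead of a partition, the map $t\mapsto i(t)$ would no longer be single-valued and the equivalence would break, which is consistent with the fact stated later that covering matroids are only transversal when all $k_i=1$.
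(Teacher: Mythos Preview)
Your proof is correct and follows exactly the route the paper intends: the paper states this proposition without a formal proof, remarking only that it follows ``according to the definitions of transversal matroids and partition matroids,'' and your argument is precisely the detailed verification of that claim, taking $\mathbf{F}=\mathbf{P}$ and showing $\mathbf{I}(\mathbf{P};1,\ldots,1)=\mathbf{I}_{T}(\mathbf{P})$.
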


Similarly, in the following, we will study under what conditions a transversal matroid is a partition matroid.

\begin{proposition}
\label{P:transversalformspartition}
Let $\mathbf{F}=\mathbf{F}(J)=\{F_{j}:j\in J\}$ be a family of subsets of $U$ and $M_{T}(\mathbf{F})$ the transversal matroid induced by $\mathbf{F}$.
If $F_{i}\cap F_{j}=\emptyset$ for all $i, j\in J, i\neq j$, then $M_{T}(\mathbf{F})$ is a partition matroid.
\end{proposition}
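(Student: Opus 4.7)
The plan is to construct an explicit partition of $U$ together with a vector of nonnegative integers whose induced partition matroid equals $M_{T}(\mathbf{F})$, and then verify that the independent sets coincide.

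First, I would set $F_{0}=U-\cup\mathbf{F}$ and form the family $\mathbf{P}$ consisting of all nonempty members of $\{F_{j}:j\in J\}$ together with $F_{0}$ if $F_{0}\neq\emptyset$. Since the hypothesis gives $F_{i}\cap F_{j}=\emptyset$ for $i\neq j$ and $F_{0}$ is disjoint from every $F_{j}$ by construction, and since $F_{0}\cup(\cup\mathbf{F})=U$, the family $\mathbf{P}$ is a partition of $U$ in the sense of Definition~\ref{D:covering} combined with pairwise disjointness. I would then assign the integer $1$ to each $F_{j}$ with $j\in J$ and the integer $0$ to the block $F_{0}$, obtaining a candidate partition matroid $M(\mathbf{P};1,\ldots,1,0)$ in the sense of Definition~\ref{D:partitionmatroid}.

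The main step is the equality $\mathbf{I}_{T}(\mathbf{F})=\mathbf{I}(\mathbf{P};1,\ldots,1,0)$, which I would prove by two inclusions. For the forward inclusion, if $T$ is a partial transversal of $\mathbf{F}$ with associated injection $\pi:T\to J$ and $t\in F_{\pi(t)}$, then any $t\in T$ lies in some $F_{j}$, so $T\cap F_{0}=\emptyset$; moreover, if two distinct $t,t'\in T$ lay in the same $F_{j}$, the pairwise disjointness of the $F_{j}$'s would force $\pi(t)=j=\pi(t')$, contradicting injectivity, so $|T\cap F_{j}|\le 1$ for every $j$. For the reverse inclusion, given $T$ with $T\cap F_{0}=\emptyset$ and $|T\cap F_{j}|\le 1$ for each $j$, disjointness guarantees that for every $t\in T$ there is a unique index $j_{t}\in J$ with $t\in F_{j_{t}}$; defining $\pi(t)=j_{t}$ yields an injection $T\to J$ satisfying $t\in F_{\pi(t)}$ by the size-one condition, so $T$ is a partial transversal.

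The step I expect to require the most care is the bookkeeping around $F_{0}$ and any empty $F_{j}$, because the definition of partition matroid is stated for an actual partition of $U$, whereas $\cup\mathbf{F}$ need not equal $U$ and some $F_{j}$ could a priori be empty. Once empty $F_{j}$'s are discarded (they contribute nothing to any partial transversal) and $F_{0}$ is absorbed as a $0$-weighted block, the argument above closes the equality cleanly, and the identity $M_{T}(\mathbf{F})=M(\mathbf{P};1,\ldots,1,0)$ exhibits $M_{T}(\mathbf{F})$ as a partition matroid.
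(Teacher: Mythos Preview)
Your proof is correct. The paper actually states this proposition without proof, so there is no argument to compare against directly; however, your construction---adjoin $F_{0}=U-\cup\mathbf{F}$ as an extra block with weight $0$ and take weight $1$ on each nonempty $F_{j}$---is precisely the device the paper uses in the immediately preceding proposition (that every transversal matroid is a covering matroid), so your approach is entirely in the paper's spirit. Your handling of the edge cases (empty $F_{j}$'s and the possibly empty $F_{0}$) is more careful than anything the paper writes down, and the two-inclusion verification of $\mathbf{I}_{T}(\mathbf{F})=\mathbf{I}(\mathbf{P};1,\ldots,1,0)$ is clean and complete.
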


According to the above two propositions, if a family of subsets of a universe is a partition and the given group of integers are equal to one, then the induced matroid is a partition matroid and a transversal matroid.
In the following, we will investigate some properties of a matroid which is a partition one and a transversal one.
First, we will introduce a type of matroids, namely, 2-circuit matroids.

\begin{definition}(2-circuit matroid~\cite{WangZhuZhuMin12matroidalstructure})
\label{D:2-circuitmatroid}
Let $M=(U, \mathbf{I})$ be a matroid. If for all $C\in\mathbf{C}(M), |C|=2$, then we say $M$ is called a 2-circuit matroid.
\end{definition}

In~\cite{WangZhuZhuMin12matroidalstructure}, Wang et al. obtain a matroid through an equivalence relation on a universe and investigate some properties of the matroid.
Since there is a one-to-one between an equivalence relation and a partition, we introduce a proposition from the viewpoint of partitions.

\begin{proposition}(\cite{WangZhuZhuMin12matroidalstructure})
\label{P:M(P)isa2-circuit}
Let $\mathbf{P}$ be a partition on $U$ and $M(\mathbf{P})=(U, \mathbf{I}(\mathbf{P}))$ the matroid induced by $\mathbf{P}$, where $\mathbf{I}(\mathbf{P})=\{X\subseteq U:\forall P\in\mathbf{P}, |X\cap P|\leq 1\}$.
Then $M(\mathbf{P})$ is a 2-circuit matroid.
\end{proposition}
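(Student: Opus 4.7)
The plan is to identify the circuits of $M(\mathbf{P})$ directly and show each one has exactly two elements. First I would note that $M(\mathbf{P})$ is the partition matroid $M(\mathbf{P}; 1, \ldots, 1)$ of Definition~\ref{D:partitionmatroid}, so by that definition it is indeed a matroid; alternatively, it is the union of the $1$-rank matroids induced by the blocks (invoking Definition~\ref{D:coveringmatroid} and Definition~\ref{D:unionofmatroids}). This removes any need to verify (I1)--(I3) by hand.

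Next I would characterize dependence. By definition of $\mathbf{I}(\mathbf{P})$, a set $X \subseteq U$ is dependent if and only if there exists some block $P \in \mathbf{P}$ with $|X \cap P| \geq 2$. In particular, every singleton and the empty set are independent because they meet each block in at most one element.

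With this in hand I would describe the circuits explicitly. Fix a circuit $C \in \mathbf{C}(M(\mathbf{P})) = Min(Opp(\mathbf{I}(\mathbf{P})))$. By the dependence characterization, there exist distinct $x, y \in C$ and a block $P \in \mathbf{P}$ with $\{x, y\} \subseteq P$. Then the pair $\{x, y\}$ is itself dependent in $M(\mathbf{P})$ since $|\{x, y\} \cap P| = 2$. By minimality of $C$ we must have $C = \{x, y\}$, giving $|C| = 2$. Conversely, any two distinct elements lying in a common block form a minimal dependent set, so circuits are precisely such pairs. By Definition~\ref{D:2-circuitmatroid}, this shows $M(\mathbf{P})$ is a 2-circuit matroid.

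There is no real obstacle here; the argument is essentially a one-line unpacking of the independence condition together with the observation that the "forbidden configuration'' in $\mathbf{I}(\mathbf{P})$ has size exactly two. The only point requiring a moment of care is the backwards direction, namely checking that every pair of distinct elements in the same block is actually minimal dependent, which is immediate because proper subsets of such a pair are the empty set or singletons, both of which are independent.
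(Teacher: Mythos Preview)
Your argument is correct: you correctly characterize the dependent sets of $M(\mathbf{P})$ and then show directly that every minimal dependent set is a two-element subset of some block, which is exactly what Definition~\ref{D:2-circuitmatroid} requires. The paper itself does not supply a proof of this proposition; it is quoted from~\cite{WangZhuZhuMin12matroidalstructure}, so there is no in-paper argument to compare against, and your direct verification is the natural one.
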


When a matroid is a transversal matroid and a partition one, the matroid is actually a 2-circuit matroid.

\begin{proposition}
Let $\mathbf{P}=\{P_{1}, \cdots, P_{m}\}$ be a partition of $U$ and $k_{1}, \cdots, k_{m}$ a group of nonnegative integers.
If $M(\mathbf{P}; k_{1}, \cdots, k_{m})$ is a partition matroid and a transversal matroid, then $M(\mathbf{P}; k_{1}, \cdots, k_{m})$ is a 2-circuit matroid.
\end{proposition}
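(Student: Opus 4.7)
The plan is to pin down the parameters $k_1,\ldots,k_m$ forced by the joint hypothesis and then invoke Proposition~\ref{P:M(P)isa2-circuit} to conclude.

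First I would read off the circuits of the partition matroid $M(\mathbf{P};k_1,\ldots,k_m)$ straight from Definition~\ref{D:partitionmatroid}. A set $X$ is dependent exactly when $|X\cap P_i|>k_i$ for some $i$, and minimality forces a circuit to lie inside a single block: any element chosen from a second block could be deleted without restoring independence, contradicting minimality. Hence every circuit of $M(\mathbf{P};k_1,\ldots,k_m)$ is a $(k_i+1)$-element subset of some $P_i$ with $|P_i|\ge k_i+1$.

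Next I would use the transversal hypothesis to force $k_i\le 1$ for every block with $|P_i|\ge 2$. Because the blocks of $\mathbf{P}$ are pairwise disjoint, the natural transversal presentation of the matroid is $\mathbf{F}=\mathbf{P}$ indexed by $\{1,\ldots,m\}$; by Definition~\ref{D:transversal} and Proposition~\ref{P:transversalmatroid}, the partial transversals of $\mathbf{P}$ are precisely the subsets meeting each block in at most one element. Matching this family against $\mathbf{I}(\mathbf{P};k_1,\ldots,k_m)=\{X:|X\cap P_i|\le k_i\}$ pins down $k_i\le 1$ for every block of size at least two, since otherwise a 2-element subset of some $P_i$ would be independent in one collection but not in the other.

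Putting the two steps together, the circuits of $M(\mathbf{P};k_1,\ldots,k_m)$ reduce to the 2-element subsets of those blocks $P_i$ with $k_i=1$ and $|P_i|\ge 2$; blocks with $k_i=0$ or $|P_i|\le k_i$ contribute no circuits at all. Consequently $M(\mathbf{P};k_1,\ldots,k_m)$ coincides with the matroid $M(\mathbf{P})$ of Proposition~\ref{P:M(P)isa2-circuit}, and is therefore a 2-circuit matroid by that proposition.

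The main obstacle is the second step: I must justify that the transversal representation really can be taken to be (essentially) $\mathbf{F}=\mathbf{P}$ rather than, say, a family in which some block is repeated — a presentation that could otherwise accommodate some $k_i\ge 2$. I would handle this by exploiting the disjointness of the $P_i$'s together with a direct comparison of independent sets, showing that any indexed family compatible with the partition structure must yield the same bound $k_i\le 1$ on every nontrivial block, so that the hypothesis of being simultaneously a partition and a transversal matroid leaves no room for larger circuits.
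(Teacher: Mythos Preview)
Your overall strategy --- pin down the $k_i$ from the transversal hypothesis and then invoke Proposition~\ref{P:M(P)isa2-circuit} --- is exactly the paper's: it simply asserts that the transversal hypothesis forces $k_1=\cdots=k_m=1$ and then applies Proposition~\ref{P:M(P)isa2-circuit}. You go further than the paper by explicitly flagging the difficulty the paper ignores, namely why no transversal presentation with a repeated block could witness some $k_i\ge 2$.

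That obstacle is real, and your proposed resolution does not work. Take $U=\{a,b,c\}$, $\mathbf{P}=\{P_1\}$ with $P_1=U$, and $k_1=2$. Then $M(\mathbf{P};2)=U_{2,3}$ is a partition matroid by Definition~\ref{D:partitionmatroid}, and it \emph{is} a transversal matroid in the sense of Definition~\ref{D:transversal} and Proposition~\ref{P:transversalmatroid}: with $J=\{1,2\}$ and $F_1=F_2=U$, the partial transversals are exactly the subsets of size at most $2$. Yet its unique circuit is $\{a,b,c\}$, so it is not a $2$-circuit matroid. Hence under the standard reading of ``transversal matroid'' the implication actually fails, and no argument ``exploiting the disjointness of the $P_i$'' can save it; the paper's one-line step is a gap, not a fact you have failed to reconstruct. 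The paper seems to rely on an unstated convention --- already visible in its earlier claim that $k_1=\cdots=k_m=1$ \emph{iff} $M(\mathbf{C};k_1,\ldots,k_m)$ is transversal --- that the transversal presentation is specifically $M_T(\mathbf{P})$; under that narrower reading the step is indeed immediate.

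A smaller point: your claim that ``blocks with $k_i=0$ contribute no circuits at all'' is incorrect. If $k_i=0$ then every singleton $\{x\}\subseteq P_i$ is dependent, hence a circuit of size $1$, and the matroid fails Definition~\ref{D:2-circuitmatroid}. So even granting $k_i\le 1$ on the nontrivial blocks, you would still need to rule out $k_i=0$ before identifying the matroid with $M(\mathbf{P})$ and invoking Proposition~\ref{P:M(P)isa2-circuit}.
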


\begin{proof}
Since $M(\mathbf{P}; k_{1}, \cdots, k_{m})$ is a transversal matroid, according to Definition~\ref{D:transversal} and Proposition~\ref{P:transversalmatroid}, we obtain $k_{1}=\cdots=k_{m}=1$.
According to Definition~\ref{D:partitionmatroid}, $\mathbf{I}(\mathbf{P}; k_{1}, \cdots, k_{m})=\{X\subseteq U:|X\cap P_{i}|\leq 1\}$.
According to Proposition~\ref{P:M(P)isa2-circuit}, we see $M(\mathbf{P}; k_{1}, \cdots, k_{m})$ is a 2-circuit matroid.
\end{proof}

\subsection{2-circuit matroids and partition-circuit matroids}
Wang et al.~\cite{WangZhuZhuMin12matroidalstructure} have established a one-to-one correspondence between equivalence relations and 2-circuit matroids.
Since there is a one-to-one correspondence between equivalence relations and partitions, we see partitions and 2-circuit matroids are one-to-one corresponding.
The following proposition is obtained.

\begin{proposition}
Let $M$ be a 2-circuit matroid.
Then $M$ is a partition matroid.
\end{proposition}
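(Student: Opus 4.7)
The plan is to recover a partition of $U$ directly from the circuits of $M$, and then verify that $M$ coincides with the partition matroid attached to this partition with all parameters equal to $1$.

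First, I would define a relation $\sim$ on $U$ by declaring $x \sim y$ if either $x = y$ or $\{x,y\} \in \mathbf{C}(M)$. Reflexivity is built in, and symmetry is immediate since $\{x,y\} = \{y,x\}$. The substantive point is transitivity: given distinct $x,y,z$ with $\{x,y\}$ and $\{y,z\}$ both circuits of $M$, I need to produce the circuit $\{x,z\}$. For this I would invoke the circuit elimination axiom, which yields a circuit $C_3 \subseteq (\{x,y\}\cup\{y,z\})\setminus\{y\} = \{x,z\}$. Since $M$ is a 2-circuit matroid, every circuit has cardinality exactly $2$, so $C_3$ cannot be empty or a singleton; hence $C_3 = \{x,z\}$, establishing $x \sim z$. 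Thus $\sim$ is an equivalence relation, inducing a partition $\mathbf{P} = \{P_1,\dots,P_m\}$ of $U$.

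Next, I would check that the independent sets of $M$ are precisely those $X \subseteq U$ with $|X \cap P_i| \leq 1$ for every $i$. A set $X$ is independent in $M$ iff it contains no circuit, and since every circuit has the form $\{x,y\}$ with $x \sim y$, this is equivalent to saying that no two distinct elements of $X$ lie in the same equivalence class, i.e.\ $|X \cap P_i| \leq 1$ for all $i$. By Definition~\ref{D:partitionmatroid} this is exactly $\mathbf{I}(\mathbf{P};1,\dots,1)$, so $M = M(\mathbf{P};1,\dots,1)$ is a partition matroid.

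The main obstacle is the transitivity step, because in an arbitrary matroid the relation ``shares a circuit with'' need not be transitive; the argument genuinely uses the hypothesis that every circuit has size $2$, which forces the eliminated circuit $C_3 \subseteq \{x,z\}$ to be all of $\{x,z\}$ rather than a loop or empty set. Once transitivity is in place, the remaining steps are essentially unpacking definitions.
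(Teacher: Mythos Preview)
Your proof is correct. The paper's own argument is essentially a one-line appeal to the cited correspondence of Wang et al.\ between equivalence relations and 2-circuit matroids (together with Proposition~\ref{P:M(P)isa2-circuit}), and simply asserts that the result is ``straightforward.'' Your argument is a self-contained reconstruction of exactly that correspondence: you build the equivalence relation from the size-2 circuits, use circuit elimination to secure transitivity, and then match the independent sets with $\mathbf{I}(\mathbf{P};1,\dots,1)$. So the route is the same in spirit, but you supply the details the paper outsources to a reference; in particular, your explicit use of the circuit elimination axiom and the observation that the 2-circuit hypothesis forces $C_3 = \{x,z\}$ is precisely the content hidden behind the paper's citation.
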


\begin{proof}
According to Proposition~\ref{P:M(P)isa2-circuit}, it is straightforward.
\end{proof}

In~\cite{LiuZhuZhang12Relationshipbetween}, we study the properties of partition matroids through $k$-rank matroids and give an expression of the dual matroid of a partition matroid.

\begin{proposition}(\cite{LiuZhuZhang12Relationshipbetween})
Let $\mathbf{P}=\{P_{1}, \cdots, P_{m}\}$ be a partition on $U$ and $k_{1}, \cdots, k_{m}$ a group of nonnegative integers.
Then,\\
\centerline{$M^{*}(\mathbf{P}; k_{1}, \cdots, k_{m})=M(\mathbf{P}; |P_{1}|-r_{M(P_{1}, k_{1})}(U), \cdots, |P_{m}|-r_{M(P_{m}, k_{m})}(U))$,}
where $r_{M(P_{i}, k_{i})}(X)=min\{|X\cap P_{i}|, k_{i}\}$ for all $X\subseteq U$.
\end{proposition}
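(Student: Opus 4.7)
The plan is to prove the matroid identity by showing both matroids share the same family of bases and then invoking Definition~\ref{D:dualmatroid}. Since $\mathbf{P}$ is a partition, the key leverage is that the blocks $P_1, \ldots, P_m$ are pairwise disjoint, so cardinality constraints on the intersections $X \cap P_i$ can be handled independently block by block.

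First I would verify the auxiliary rank formula $r_{M(P_i, k_i)}(X) = \min\{|X \cap P_i|, k_i\}$. By Definition~\ref{D:krankmatroid}, every $I \in \mathbf{I}(P_i, k_i)$ is a subset of $P_i$ of size at most $k_i$, so every independent subset of $X$ is a subset of $X \cap P_i$ of size at most $k_i$; Definition~\ref{D:rank} then gives the claimed value, since any subset of $X \cap P_i$ up to that size is independent. In particular $r_{M(P_i, k_i)}(U) = \min\{|P_i|, k_i\}$, and setting $k_i' := |P_i| - \min\{|P_i|, k_i\}$ we have $0 \le k_i' \le |P_i|$, so $\min\{|P_i|, k_i'\} = k_i'$.

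Next I would characterize the bases of $M(\mathbf{P}; k_1, \ldots, k_m)$. By Definition~\ref{D:partitionmatroid} and Definition~\ref{D:base}, $B$ is a base iff $|B \cap P_i| \le k_i$ for every $i$ and $B$ is maximal with this property. Because the blocks are disjoint, maximality is equivalent to saturating each block individually, i.e.\ $|B \cap P_i| = \min\{|P_i|, k_i\}$ for all $i$. Applying the same characterization to $M(\mathbf{P}; k_1', \ldots, k_m')$ and using $\min\{|P_i|, k_i'\} = k_i'$, its bases are precisely the sets $B'$ with $|B' \cap P_i| = k_i' = |P_i| - \min\{|P_i|, k_i\}$ for all $i$.

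Finally I would take complements. For any $B$ with $|B \cap P_i| = \min\{|P_i|, k_i\}$, the complement $B^c$ satisfies $|B^c \cap P_i| = |P_i| - \min\{|P_i|, k_i\} = k_i'$ for each $i$, and conversely. Therefore $\{B^c : B \in \mathbf{B}(M(\mathbf{P}; k_1, \ldots, k_m))\}$ coincides exactly with $\mathbf{B}(M(\mathbf{P}; k_1', \ldots, k_m'))$, which by Definition~\ref{D:dualmatroid} yields the required equality of matroids. The only real subtlety—not an obstacle so much as a bookkeeping trap—is the boundary case $k_i \ge |P_i|$, where $k_i' = 0$ and the dual matroid has $P_i$ entirely outside every base; the formula still behaves correctly precisely because we subtract $\min\{|P_i|, k_i\}$ rather than $k_i$, which is why the rank expression (instead of $k_i$) appears in the statement.
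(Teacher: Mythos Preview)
Your argument is correct. The paper itself does not supply a proof of this proposition; it is quoted verbatim from \cite{LiuZhuZhang12Relationshipbetween} and used as a black box to derive Corollary~\ref{C:MandthedualofM}. So there is no ``paper's own proof'' to compare against here. Your approach---computing the bases of $M(\mathbf{P};k_1,\ldots,k_m)$ block by block, taking complements, and matching against the bases of $M(\mathbf{P};k_1',\ldots,k_m')$ via Definition~\ref{D:dualmatroid}---is the natural one and is carried out cleanly, including the edge case $k_i \ge |P_i|$ that explains why the statement uses $r_{M(P_i,k_i)}(U)$ rather than $k_i$ itself.
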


\begin{corollary}
\label{C:MandthedualofM}
Let $\mathbf{P}=\{P_{1}, \cdots, P_{m}\}$ be a partition on $U$ and $k_{1}, \cdots, k_{m}$ a group of nonnegative integers.
Then $M^{*}(\mathbf{P}; 1, \cdots, 1)=M(\mathbf{P}; |P_{1}|-1, \cdots, |P_{m}|-1)$, i.e., $\mathbf{I}^{*}(\mathbf{P}; 1, \cdots, 1)=\{X\subseteq U:|X\cap P_{i}|\leq |P_{i}|-1\}$.
\end{corollary}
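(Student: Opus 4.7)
The plan is to derive this corollary as a direct specialization of the preceding proposition on dual matroids of partition matroids. Setting $k_{1}=\cdots=k_{m}=1$ in the expression $M^{*}(\mathbf{P}; k_{1}, \cdots, k_{m})=M(\mathbf{P}; |P_{1}|-r_{M(P_{1}, k_{1})}(U), \cdots, |P_{m}|-r_{M(P_{m}, k_{m})}(U))$, the task reduces to evaluating $r_{M(P_{i}, 1)}(U)$ for each index $i$.

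Using the closed-form rank function $r_{M(P_{i}, k_{i})}(X)=\min\{|X\cap P_{i}|, k_{i}\}$ supplied by the same proposition, I would compute $r_{M(P_{i}, 1)}(U)=\min\{|P_{i}|, 1\}$. Since $\mathbf{P}$ is a partition of $U$, every block $P_{i}$ is nonempty by Definition~\ref{D:covering} (a partition is in particular a covering, and no element of a covering is empty), so $|P_{i}|\geq 1$ and thus $r_{M(P_{i}, 1)}(U)=1$ for each $i$. Substituting this back into the formula of the preceding proposition yields the first claimed equality $M^{*}(\mathbf{P}; 1, \cdots, 1)=M(\mathbf{P}; |P_{1}|-1, \cdots, |P_{m}|-1)$.

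The second assertion is then obtained by unfolding the right-hand side via Definition~\ref{D:partitionmatroid}: the independent sets of $M(\mathbf{P}; |P_{1}|-1, \cdots, |P_{m}|-1)$ are by definition $\{X\subseteq U : |X\cap P_{i}|\leq |P_{i}|-1 \text{ for } 1\leq i\leq m\}$, which is precisely the description of $\mathbf{I}^{*}(\mathbf{P}; 1, \cdots, 1)$ claimed in the statement.

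The argument presents no serious obstacle, as both equalities are essentially substitution and definition-unfolding. The only point requiring mild care is the verification that $|P_{i}|\geq 1$, which ensures that the nonnegative-integer hypothesis of Definition~\ref{D:partitionmatroid} is still satisfied for the tuple $(|P_{1}|-1, \cdots, |P_{m}|-1)$; without this observation the right-hand partition matroid might not even be well-defined.
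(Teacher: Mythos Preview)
Your proposal is correct and matches the paper's own treatment: the paper states this corollary without proof, as an immediate specialization of the preceding proposition, and your substitution $k_{i}=1$ together with the evaluation $r_{M(P_{i},1)}(U)=\min\{|P_{i}|,1\}=1$ is exactly the intended computation. The additional remark that $|P_{i}|\geq 1$ ensures the parameters $|P_{i}|-1$ remain nonnegative is a worthwhile observation that the paper leaves implicit.
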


In~\cite{LiuZhu12characteristicofpartition-circuitmatroid}, since a partition satisfies the circuit axioms of matroids, we introduce a new type of matroids, namely partition-circuit matroids.

\begin{proposition}(\cite{LiuZhu12characteristicofpartition-circuitmatroid})
Let $\mathbf{P}$ be a partition on $U$ and $M_{\mathbf{P}}=(U, \mathbf{I}_{\mathbf{P}})$ the partition-circuit matroid induced by $\mathbf{P}$.
Then, $\mathbf{I}_{\mathbf{P}}=\{X\subseteq U:\forall P\in\mathbf{P}, |X\cap P|\leq |P|-1\}$.
\end{proposition}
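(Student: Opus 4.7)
The plan is to argue directly from the definition of a partition-circuit matroid, in which the circuits are taken to be the blocks of $\mathbf{P}$, together with the standard fact that the independent sets of a matroid are precisely those subsets that contain no circuit. So the strategy has two steps: translate ``$X$ contains no block of $\mathbf{P}$'' into the cardinality condition ``$|X\cap P|\leq |P|-1$ for every $P\in\mathbf{P}$,'' and then invoke the circuit/independent set duality to conclude.

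First I would briefly verify that the partition actually defines a matroid by checking the circuit axioms. Axioms (C1) and (C2) are immediate: blocks of a partition are nonempty and no block is contained in another, since distinct blocks are disjoint. The circuit elimination axiom (C3) holds vacuously, because for distinct $P_i,P_j\in\mathbf{P}$ one has $P_i\cap P_j=\emptyset$, so there is no element $e$ to eliminate. Thus $\mathbf{P}$ is the family of circuits of a well-defined matroid $M_{\mathbf{P}}$, and by Definition~\ref{D:circuit} the dependent sets are exactly the subsets of $U$ that contain some $P\in\mathbf{P}$.

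Next I would translate this containment condition into the claimed cardinality condition. For any $X\subseteq U$ and any $P\in\mathbf{P}$, the inclusion $P\subseteq X$ is equivalent to $|X\cap P|=|P|$; hence $P\not\subseteq X$ is equivalent to $|X\cap P|\leq |P|-1$. Therefore $X$ contains no block of $\mathbf{P}$ if and only if $|X\cap P|\leq |P|-1$ for every $P\in\mathbf{P}$. Combined with the previous step, this gives $\mathbf{I}_{\mathbf{P}}=\{X\subseteq U:\forall P\in\mathbf{P},\ |X\cap P|\leq |P|-1\}$, which is exactly the claim.

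I do not anticipate a genuine obstacle: the content is a bookkeeping translation between ``contains no circuit'' and the cardinality inequality, once one accepts that the partition serves as the circuit family. The only subtlety worth flagging is the vacuous verification of (C3), which should be stated explicitly rather than glossed over, since it is the one axiom that is not immediately visual from the partition structure.
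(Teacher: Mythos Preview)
Your proposal is correct and matches the approach indicated in the paper: the proposition is quoted from \cite{LiuZhu12characteristicofpartition-circuitmatroid} without a proof here, but the surrounding text explicitly says that partition-circuit matroids arise because ``a partition satisfies the circuit axioms of matroids,'' which is exactly your route---verify (C1)--(C3) for $\mathbf{P}$, then read off the independent sets as those containing no block. Your translation $P\subseteq X \Leftrightarrow |X\cap P|=|P|$ is the only computation needed, and it is sound since $U$ is finite.
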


For a partition on a universe, we see that the dual matroid of the 2-circuit matroid is the partition-circuit matroid induced by the partition.

\begin{proposition}
Suppose $M$ is a partition-circuit matroid, then $M$ is a partition matroid.
\end{proposition}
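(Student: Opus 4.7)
The plan is to show that the partition-circuit matroid $M_{\mathbf{P}}$ coincides, on the nose, with the partition matroid $M(\mathbf{P}; |P_{1}|-1, \cdots, |P_{m}|-1)$; hence any partition-circuit matroid is a partition matroid.

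First, I would invoke the proposition just stated (from the authors' cited work on partition-circuit matroids), which characterizes the independent sets of the partition-circuit matroid induced by $\mathbf{P}=\{P_{1},\cdots,P_{m}\}$ as
\[
\mathbf{I}_{\mathbf{P}}=\{X\subseteq U:\forall P\in\mathbf{P},\ |X\cap P|\leq |P|-1\}.
\]
Next, I would choose the nonnegative integers $k_{i}:=|P_{i}|-1$ for $i=1,\cdots,m$ (these are nonnegative because each block of a partition is nonempty) and apply Definition~\ref{D:partitionmatroid} directly: the partition matroid $M(\mathbf{P};k_{1},\cdots,k_{m})$ has independent sets
\[
\mathbf{I}(\mathbf{P};k_{1},\cdots,k_{m})=\{X\subseteq U:|X\cap P_{i}|\leq k_{i}=|P_{i}|-1,\ 1\leq i\leq m\}.
\]

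The two descriptions coincide term by term, so $\mathbf{I}_{\mathbf{P}}=\mathbf{I}(\mathbf{P};|P_{1}|-1,\cdots,|P_{m}|-1)$, which means $M_{\mathbf{P}}$ is, by definition, a partition matroid. An alternative route would be to appeal to Corollary~\ref{C:MandthedualofM}: since the 2-circuit matroid $M(\mathbf{P};1,\cdots,1)$ has dual whose independent sets are exactly $\{X\subseteq U:|X\cap P_{i}|\leq |P_{i}|-1\}$, this dual is at the same time a partition matroid and the partition-circuit matroid, closing the loop with the remark preceding the statement that ``the dual matroid of the 2-circuit matroid is the partition-circuit matroid induced by the partition.''

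There is really no hard step here; the main obstacle is only bookkeeping, namely verifying that $|P_{i}|-1$ is an admissible choice of parameter (nonnegative, as required by Definition~\ref{D:partitionmatroid}) and making sure the two families are literally the same set rather than merely isomorphic as matroids. Once the parameter choice $k_{i}=|P_{i}|-1$ is made, the identification is immediate.
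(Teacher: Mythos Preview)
Your proof is correct and matches the paper's approach: the paper states this proposition without an explicit proof environment, treating it as an immediate consequence of the preceding characterization $\mathbf{I}_{\mathbf{P}}=\{X\subseteq U:\forall P\in\mathbf{P},\ |X\cap P|\leq |P|-1\}$ together with Definition~\ref{D:partitionmatroid}, which is exactly what you do by taking $k_i=|P_i|-1$. Your alternative route via Corollary~\ref{C:MandthedualofM} is also valid and aligns with the paper's remark that the partition-circuit matroid is the dual of the 2-circuit matroid.
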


\subsection{Double-circuit matroids}
In this section, we study some properties of a matroid which is both a 2-circuit matroid and a partition matroid.
First, a new type of matroids, called double-circuit matroids, are proposed.

\begin{definition}(Double-circuit matroid)
\label{D:double-circuitmatroid}
Let $M=(U, \mathbf{I})$ be a matroid.
$M$ is called a double-circuit matroid if $|C|=2, |C^{*}|=2$ for all $C\in\mathbf{C}(M), C^{*}\in\mathbf{C}^{*}(M)$.
\end{definition}

When a matroid is a 2-circuit matroid and a partition-circuit matroid, we can obtain the following proposition.

\begin{proposition}
\label{P:theconditionofMisdouble-circuitmatroid}
Let $\mathbf{P}=\{P_{1}, \cdots, P_{m}\}$ be a partition on $U$ and $k_{1}=\cdots=k_{m}=1$.
The following conditions are equivalent:\\
(1) $|P_{1}|=\cdots=|P_{m}|=2$,\\
(2) $M(\mathbf{P}; k_{1}, \cdots, k_{m})$ is a double-circuit matroid.
\end{proposition}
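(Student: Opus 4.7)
The plan is to leverage the duality machinery already assembled in the previous subsections, so that the equivalence reduces to an inspection of the circuits of the dual matroid. Write $M = M(\mathbf{P}; 1, \ldots, 1)$, with independent sets $\{X \subseteq U : |X \cap P_i| \leq 1 \text{ for all } i\}$. By Proposition~\ref{P:M(P)isa2-circuit}, $M$ is automatically a 2-circuit matroid, so the condition $|C| = 2$ for every $C \in \mathbf{C}(M)$ is satisfied independently of the sizes of the blocks $P_i$. Hence the whole content of the equivalence is in the cocircuit condition $|C^*| = 2$ for every $C^* \in \mathbf{C}^*(M)$.

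Next, I would apply Corollary~\ref{C:MandthedualofM} to identify the dual: $M^* = M(\mathbf{P}; |P_1| - 1, \ldots, |P_m| - 1)$, with independent sets $\mathbf{I}^*(\mathbf{P}; 1, \ldots, 1) = \{X \subseteq U : |X \cap P_i| \leq |P_i| - 1 \text{ for each } i\}$. This is exactly the partition-circuit matroid on $\mathbf{P}$, and I would remark that its dependent sets are precisely those $X$ for which $|X \cap P_i| = |P_i|$ for at least one $i$, i.e.\ those $X$ containing some block $P_i$ entirely. Consequently, the minimal dependent sets of $M^*$, which by definition are the cocircuits of $M$, are exactly the blocks themselves: $\mathbf{C}^*(M) = \mathbf{C}(M^*) = \{P_1, \ldots, P_m\}$.

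With this identification in hand, the equivalence is immediate in both directions. For (1)$\Rightarrow$(2): if each $|P_i| = 2$, then every cocircuit has cardinality 2, and combined with the 2-circuit property above we conclude that $M$ is double-circuit. For (2)$\Rightarrow$(1): if $M$ is double-circuit, then every $P_i \in \mathbf{C}^*(M)$ must satisfy $|P_i| = 2$.

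The only step that requires a small argument rather than a citation is the identification of the cocircuits of $M$ with the partition blocks; I expect this to be the main obstacle, but it is straightforward once one unpacks the definition of $\mathbf{I}^*(\mathbf{P}; 1, \ldots, 1)$. Everything else is a direct appeal to Proposition~\ref{P:M(P)isa2-circuit}, Corollary~\ref{C:MandthedualofM}, and Definition~\ref{D:double-circuitmatroid}.
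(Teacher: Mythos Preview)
Your proof is correct and uses the same two ingredients as the paper's argument: Proposition~\ref{P:M(P)isa2-circuit} for the circuit side and Corollary~\ref{C:MandthedualofM} for the dual. The only tactical difference is that you compute $\mathbf{C}^*(M)=\{P_1,\ldots,P_m\}$ once and read off both implications from it, whereas the paper, for $(1)\Rightarrow(2)$, instead observes that $|P_i|=2$ makes $M=M^*$ outright (so the 2-circuit property of $M$ transfers to $M^*$), and for $(2)\Rightarrow(1)$ invokes Proposition~\ref{P:M(P)isa2-circuit} on $M^*=M(\mathbf{P};|P_1|-1,\ldots,|P_m|-1)$ to force $|P_i|-1=1$; your explicit cocircuit identification makes that last step more transparent.
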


\begin{proof}
$(1)\Rightarrow (2)$: Since $k_{1}=\cdots=k_{m}=1$ and $|P_{1}|=\cdots=|P_{m}|=2$, according to Corollary~\ref{C:MandthedualofM}, we obtain $M(\mathbf{P}; 1, \cdots, 1)=M^{*}(\mathbf{P}; 1, \cdots, 1)$, i.e., $\mathbf{I}(\mathbf{P}; 1, \cdots, 1)=\mathbf{I}^{*}(\mathbf{P}; 1, \cdots, 1)=\{X\subseteq U:|X\cap P_{i}|\leq 1\}$.
According to Proposition~\ref{P:M(P)isa2-circuit} and Definition~\ref{D:2-circuitmatroid}, $|C|=2, |C^{*}|=2$ for all $C\in\mathbf{C}(M(\mathbf{P}; 1, \cdots, 1)), C^{*}\in\mathbf{C}^{*}(M(\mathbf{P}; 1, \cdots, 1))$.
According to Definition~\ref{D:double-circuitmatroid}, $M(\mathbf{P}; k_{1}, \cdots, k_{m})$ is a double-circuit matroid.\\
$(2)\Rightarrow (1)$: According to Corollary~\ref{C:MandthedualofM}, $M^{*}(\mathbf{P}; 1, \cdots, 1)=M(\mathbf{P}; |P_{1}|-1, \cdots, |P_{m}|-1)$.
Since $M(\mathbf{P}; k_{1}, \cdots, k_{m})$ is a double-circuit matroid, then $|C|=2, |C^{*}|=2$ for all $C\in\mathbf{C}(M(\mathbf{P}; 1, \cdots, 1)), C^{*}\in\mathbf{C}^{*}(M(\mathbf{P}; |P_{1}|-1, \cdots, |P_{m}|-1))$.
According to Porposition~\ref{P:M(P)isa2-circuit}, $|P_{i}|-1=1$, i.e., $|P_{i}|=2$, for $i=1, \cdots, m$.
\end{proof}

In the following, we investigate some properties of a double-circuit matroid.
First, we introduce the isomorphism in matroid theory.

\begin{definition}(Isomorphism~\cite{Lai01Matroid})
Let $M_{1}=(U_{1}, \mathbf{I}_{1}), M_{2}=(U_{2}, \mathbf{I}_{2})$ be two matroids.
$M_{1}$ and $M_{2}$ are isomorphic and denoted as $M_{1}\cong M_{2}$ if there exists a bijection $\psi: U_{1}\rightarrow U_{2}$ such that for all $X\subseteq U_{1}, X\in\mathbf{I}_{1}$ if and only if $\psi(X)\in\mathbf{I}_{2}$.
\end{definition}

For a matroid, it is a self-dual matroid if the matroid and its dual matroid are isomorphic.
When the matroid and its dual matroid are equal to each other, we say the matroid is an identically self-dual matroid.

\begin{proposition}
Let $M$ be a double-circuit matroid.
Then $M$ is an identically self-dual matroid.
\end{proposition}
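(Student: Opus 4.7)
The plan is to exploit the classification of $2$-circuit matroids as partition matroids and then carry out a direct dual computation via Corollary~\ref{C:MandthedualofM}. Since a double-circuit matroid is in particular a $2$-circuit matroid, the earlier proposition that every $2$-circuit matroid is a partition matroid lets me write $M=M(\mathbf{P};k_{1},\cdots,k_{m})$ for some partition $\mathbf{P}=\{P_{1},\cdots,P_{m}\}$ of $U$ and nonnegative integers $k_{i}$. Inspecting the minimal dependent sets of a partition matroid shows that the circuits contained in $P_{i}$ are precisely the $(k_{i}+1)$-subsets of $P_{i}$. Because every circuit of $M$ has size $2$ and $M$ has no loops, one may normalize so that $k_{1}=\cdots=k_{m}=1$.

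Next I would exploit the cocircuit side of the hypothesis via Corollary~\ref{C:MandthedualofM}, which gives $M^{*}=M(\mathbf{P};|P_{1}|-1,\cdots,|P_{m}|-1)$. Reading off the circuits of this partition matroid, one sees that the cocircuits of $M$ are exactly the blocks $P_{i}$, so the assumption $|C^{*}|=2$ for every $C^{*}\in\mathbf{C}^{*}(M)$ forces $|P_{i}|=2$ for all $i$. In particular, a singleton $P_{i}$ would give a $1$-cocircuit and a block of size at least $3$ would give a cocircuit of size at least $3$; both are ruled out. This is the very condition appearing in Proposition~\ref{P:theconditionofMisdouble-circuitmatroid}.

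With $k_{i}=1$ and $|P_{i}|=2$ secured for every $i$, a second substitution into Corollary~\ref{C:MandthedualofM} now yields $M^{*}=M(\mathbf{P};|P_{1}|-1,\cdots,|P_{m}|-1)=M(\mathbf{P};1,\cdots,1)=M$, an equality of matroids on the same ground set $U$ rather than a mere isomorphism, which is exactly the definition of an identically self-dual matroid.

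The main obstacle I anticipate is the normalization step for the parameters $(\mathbf{P},k_{1},\cdots,k_{m})$: the classification only says $M$ is \emph{some} partition matroid, and I need to pin down a presentation in which every $k_{i}=1$ and no $P_{i}$ is a singleton before Corollary~\ref{C:MandthedualofM} can be applied cleanly. Once this canonical form is in hand, the equality $M=M^{*}$ reduces to the trivial arithmetic identity $2-1=1$ running uniformly across every block.
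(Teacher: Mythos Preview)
Your proposal is correct and follows essentially the same route as the paper: represent $M$ as a partition matroid $M(\mathbf{P};1,\cdots,1)$, use the cocircuit hypothesis to force $|P_{i}|=2$ (this is exactly the content of Proposition~\ref{P:theconditionofMisdouble-circuitmatroid}, which the paper simply cites), and then read off $M^{*}=M(\mathbf{P};1,\cdots,1)=M$ from Corollary~\ref{C:MandthedualofM}. The only difference is that you spell out the normalization to $k_{i}=1$ and re-derive the block-size constraint, whereas the paper's two-line proof leans on the one-to-one correspondence between $2$-circuit matroids and partitions (with all $k_{i}=1$) established earlier and invokes Proposition~\ref{P:theconditionofMisdouble-circuitmatroid} directly; your version is more self-contained but not a different strategy.
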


\begin{proof}
According to Proposition~\ref{P:theconditionofMisdouble-circuitmatroid} and Corollary~\ref{C:MandthedualofM}, $M=M^{*}$.
Therefore, $M$ is an identically self-dual matroid.
\end{proof}

In summary, we can use the following graph to represent the work of this section.

\begin{figure*}[tb]
    \begin{center}
    \includegraphics[width=4.5in]{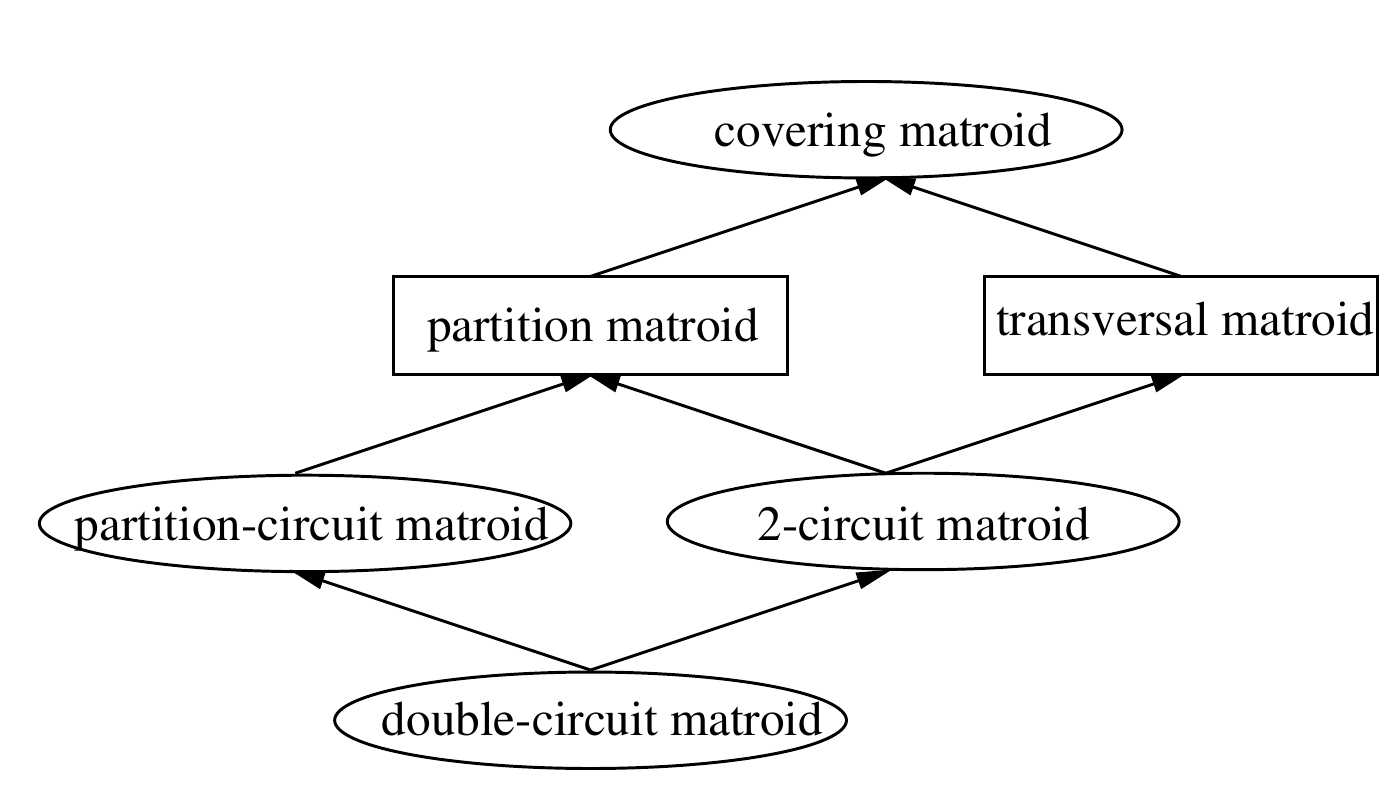}
    \caption{Covering matroids and some special matroids}
    \label{processGraph}
    \end{center}
\end{figure*}


\section{Conclusions}
\label{S:conclusions}
In this paper, we proposed a new type of matroids called covering matroids and studied the connections with the second type of covering-based rough sets and some special matroids.
First, through extending a partition to a covering, we used the union of $k$-rank matroids to obtain a new matroid called covering matroid and proved this new matroid was an extension of the partition matroid.
Second, some concepts of covering-based rough sets were studied by covering matroids, such as neighborhood, the lower and upper approximation operators.
Third, we investigated the relationships between covering matroids and some special matroids such as transversal matroids, partition matroids, 2-circuit matroids and partition-circuit matroids.
In future work, we will study the applications of covering matroids.
Since partition matroids are successfully applied to attribution reduction in information systems, as an extension of partition matroids, covering matroids will be used to solve some problems of information systems.

\section*{Acknowledgments}
This work is supported in part by the National Natural Science Foundation of China under Grant No. 61170128, the Natural Science Foundation of Fujian Province, China, under Grant Nos. 2011J01374 and 2012J01294, the Science and Technology Key Project of Fujian Province, China, under Grant No. 2012H0043 and State key laboratory of management and control for complex systems open project under Grant No. 20110106.


\end{document}